
\documentclass{article}

\usepackage{microtype}
\usepackage{graphicx}
\usepackage{subfigure}
\usepackage{booktabs} 

\usepackage{hyperref}

\usepackage{amsthm}
\usepackage{amsmath}
\usepackage{amsfonts}
\usepackage{amssymb}
\usepackage{bm}
\usepackage{xcolor}
\usepackage{bbm}
\usepackage{graphicx}
\usepackage{enumitem} 

\theoremstyle{definition}

\newcommand{\mbf}[1]{\mathbf{#1}}
\newcommand{\mcl}[1]{\mathcal{#1}}
\newcommand{\mbb}[1]{\mathbb{#1}}
\newcommand{\argmax}{\text{arg}\!\max}
\newcommand{\argmin}{\text{arg}\!\min}

\newcommand{\stableopt}{\textsc{StableOpt}}
\newcommand{\var}{\textsc{VaR}}
\newcommand{\cvar}{\textsc{CVaR}}
\newcommand{\vy}{V-UCB}
\newcommand{\dx}{\mcl{D}_{\mbf{x}}}
\newcommand{\dz}{\mcl{D}_{\mbf{z}}}
\newcommand{\zlv}{\mbf{z}_{\text{LV}}}
\newcommand{\kg}{$\rho \text{KG}^{apx}$}
\newcommand{\kgfull}{$\rho \text{KG}$}

\usepackage[accepted]{icml2021}


\icmltitlerunning{Value-at-Risk Optimization with Gaussian Processes}

\newtheorem{definition}{Definition}
\newtheorem{theorem}{Theorem}
\newtheorem{corollary}{Corollary}[theorem]
\newtheorem{lemma}{Lemma}
\newtheorem{remark}{Remark}
\newtheorem{example}{Example}

\usepackage{pagecolor}
\begin{document}

\twocolumn[
\icmltitle{Value-at-Risk Optimization with Gaussian Processes}



\icmlsetsymbol{equal}{*}

\begin{icmlauthorlist}
\icmlauthor{Quoc Phong Nguyen}{to}
\icmlauthor{Zhongxiang Dai}{to}
\icmlauthor{Bryan Kian Hsiang Low}{to}
\icmlauthor{Patrick Jaillet}{goo}
\end{icmlauthorlist}

\icmlaffiliation{to}{Department of Computer Science, National University of Singapore, Republic of Singapore}
\icmlaffiliation{goo}{Department of Electrical Engineering and Computer Science, Massachusetts Institute of Technology, USA}

\icmlcorrespondingauthor{Quoc Phong Nguyen}{qphong@comp.nus.edu.sg}

\icmlkeywords{Machine Learning, ICML}

\vskip 0.3in
]



\printAffiliationsAndNotice{} 

\begin{abstract}
\emph{Value-at-risk} ({\var}) is an established measure to assess risks in critical real-world applications with random environmental factors.
This paper presents a novel \emph{{\var} upper confidence bound} (V-UCB) algorithm for maximizing the {\var} of a black-box objective function with the first no-regret guarantee. To realize this, we first derive a confidence bound of {\var} and then prove the existence of values of the environmental random variable (to be selected to achieve no regret) such that the confidence bound of {\var} lies within that of the objective function evaluated at such values.
Our V-UCB algorithm empirically demonstrates state-of-the-art performance in optimizing synthetic benchmark functions, a portfolio optimization problem, and a simulated robot task.
\end{abstract}

\section{Introduction}
Consider the problem of maximizing an expensive-to-compute black-box objective function $f$ that depends on an \emph{optimization variable} $\mbf{x}$ and an \emph{environmental random variable} $\mbf{Z}$.
Due to the randomness in $\mbf{Z}$, the function evaluation $f(\mbf{x},\mbf{Z})$ of $f$ at $\mbf{x}$ is a random variable.
Though for such an objective function $f$, \emph{Bayesian optimization} (BO) can be naturally applied to maximize its expectation $\mbb{E}_{\mbf{Z}}[f(\mbf{x},\mbf{Z})]$ over $\mbf{Z}$ \cite{toscano2018bayesian}, this maximization objective overlooks the \emph{risks} of potentially undesirable function evaluations.
These risks can arise from either (a) the realization of an unknown distribution of $\mbf{Z}$ or (b) the realization of the random $\mbf{Z}$ given that the distribution of $f(\mbf{x},\mbf{Z})$ can be estimated well or that of $\mbf{Z}$ is known.
The issue (a) has been tackled by distributionally robust BO \cite{kirschner2020distributionally,nguyen2020distributionally} which maximizes $\mbb{E}_{\mbf{Z}}[f(\mbf{x},\mbf{Z})]$ under the worst-case realization of the distribution of $\mbf{Z}$.
To resolve the issue (b), 
the risk from the uncertainty in $\mbf{Z}$ can be controlled via the mean-variance optimization framework \cite{iwazaki2020mean}, \emph{value-at-risk} ({\var}), or \emph{conditional value-at-risk} ({\cvar}) \cite{borisk20,torossian2020bayesian}.
The work of \citet{bogunovic2018adversarially} has considered \emph{adversarially robust BO},
where $\mbf{z}$ is controlled by an adversary deterministically.\footnote{We use upper-case letter $\mbf{Z}$ to denote the environmental random variable and lower-case letter $\mbf{z}$ to denote its realization or a (non-random) variable.}  
In this case, the objective is to find $\mbf{x}$ that maximizes the function under the worst-case realization of $\mbf{z}$, i.e., $\argmax_{\mbf{x}} \min_{\mbf{z}} f(\mbf{x},\mbf{z})$.


In this paper, we focus on case (b) where the distribution of 
$\mbf{Z}$ is known (or well-estimated). 
For example, in agriculture, although farmers cannot control the temperature of an outdoor farm, 
its distribution
can be estimated from historical data and controlled in an indoor environment for optimizing the plant yield.
Given the distribution of $\mbf{Z}$, the objective is to control the risk
that the function evaluation $f(\mbf{x},\mbf{z})$, 
for a $\mbf{z}$ sampled from $\mbf{Z}$,
is small.
One popular framework is to control the trade-off between the mean (viewed as reward) and the variance (viewed as risk) of the function evaluation with respect to $\mbf{Z}$ \cite{iwazaki2020mean}.
However, quantifying the risk 
using variance implies indifference between positive and negative deviations from the mean, while people often have asymmetric risk attitudes \cite{goh2012portfolio}. In our problem of maximizing the objective function, it is reasonable to assume that people  are risk-averse towards only the negative deviations 
from the mean, i.e., 
the risk of getting lower function evaluations. 
Thus, it is more appropriate to adopt
risk measures 
with this asymmetric property, such as \emph{value-at-risk} ({\var})
which is a widely adopted risk measure in real-world applications (e.g., banking \cite{basel06}).
Intuitively, the risk 
that the random $f(\mbf{x},\mbf{Z})$ is less than {\var} at level $\alpha \in (0,1)$ does not exceed $\alpha$, 
e.g., by specifying a small value of $\alpha$ as $0.1$, this risk is controlled to be at most $10\%$.
Therefore, to maximize the function $f$ while controlling the risk of undesirable (i.e., small) function evaluations,
we aim to maximize {\var} of the random function $f(\mbf{x},\mbf{Z})$ over $\mbf{x}$.

The recent work of \citet{borisk20} has used BO to maximize {\var} and has achieved state-of-the-art empirical performances.
They have assumed that we are able to select both $\mbf{x}$ and $\mbf{z}$ to query during BO,
which is motivated by fact that physical experiments can usually be studied by 
simulation \cite{williams2000sequential}. 
In the example on agriculture given above, we can control the temperature, light and water ($\mbf{z}$) in a small indoor environment to optimize the amount of fertilizer ($\mbf{x}$), which can then be used in an outdoor environment with random weather factors.
\citet{borisk20} have exploited the ability to select $\mbf{z}$ to model the function $f(\mbf{x},\mbf{z})$ as a GP, which allows them to retain the appealing closed-form posterior belief of the objective function.
To select the queries $\mbf{x}$ and $\mbf{z}$, they have designed a one-step lookahead approach based on the well-known \emph{knowledge gradient} (KG) acquisition function \cite{scott2011correlated}.
However, the one-step lookahead incurs an expensive nested optimization procedure, which is computationally expensive and hence requires approximations.
Besides, the acquisition function can only be approximated using samples of the objective function $f$ from the GP posterior and the environmental random variable $\mbf{Z}$.
While they have analysed the asymptotically unbiased and consistent estimator of the gradients, it is challenging to obtain a guarantee for the convergence of their algorithm.
Another recent work \cite{torossian2020bayesian} has also applied BO to maximize {\var} using an asymmetric Laplace likelihood function and variational approximation of the posterior belief. However, in contrast to \citet{borisk20} and our work, they have focused on a different setting where the realizations of $\mbf{Z}$ are not observed.

In this paper, we adopt the setting of \citet{borisk20} which allows us to choose both $\mbf{x}$ and $\mbf{z}$ to query, and assume that the distribution of $\mbf{Z}$ is known or well-estimated. 
Our contributions include:

\textbf{Firstly}, we propose a novel BO algorithm named \emph{Value-at-risk Upper Confidence Bound} ({\vy}) in Section~\ref{sec:vy}.
Unlike the work of \citet{borisk20}, {\vy} is equipped with a no-regret convergence guarantee and is more computationally efficient.
%
To guide its query selection and facilitate its proof of the no-regret guarantee, the classical GP-UCB algorithm~\cite{srinivas10ucb} constructs a \emph{confidence bound} of the objective function. Similarly, to maximize the {\var} of a random function, we, for the first time to the best of our knowledge, construct a confidence bound of {\var} (Lemma~\ref{lemma:confbound}). 
The resulting confidence bound of {\var} naturally gives rise to a strategy to select $\mbf{x}$.
However, it remains a major challenge to select $\mbf{z}$ to preserve the no-regret convergence of GP-UCB.
To this end, we firstly prove that our algorithm is no-regret as long as we ensure that at the selected $\mbf{z}$, the confidence bound of {\var} \emph{lies within} the confidence bound of the objective function.
Next, we also prove that this query selection strategy is \emph{feasible}, i.e., such values of $\mbf{z}$, referred to as \emph{lacing values} (LV), exist.

\textbf{Secondly}, although our theoretical no-regret property allows the selection of \emph{any} LV, we design a heuristic to select an LV such that it improves our empirical performance over random selection of LV (Section~\ref{subsec:improvedselectz}).
We also discuss the implications when $\mbf{z}$ cannot be selected by BO and is instead randomly sampled by the environment during BO (Remark~\ref{rmk:unknownZ}).
\textbf{Thirdly}, we show that adversarially robust BO \cite{bogunovic2018adversarially} can be cast as a special case of our V-UCB
when the risk level $\alpha$ of {\var} approaches $0$ from the right and the domain of $\mbf{z}$ is the support of $\mbf{Z}$.
In this case, 
adversarially robust BO \cite{bogunovic2018adversarially} selects the same input queries as those selected by V-UCB since the set of LV collapse into the set of minimizers of the lower bound of the objective function (Section~\ref{subsec:vy-stableopt}).
\textbf{Lastly}, we provide practical techniques for
implementing {\vy} with continuous random variable $\mbf{Z}$ (Section~\ref{sec:continuousz}): we (a) introduce \emph{local neural surrogate optimization} with the \emph{pinball loss} to optimize {\var}, and (b) construct an objective function to search for an LV in the continuous support of $\mbf{Z}$.

The performance of our proposed algorithm is empirically demonstrated in optimizing several synthetic benchmark functions, a portfolio optimization problem, and a simulated robot task in Section~\ref{sec:experiments}.

\section{Problem Statement and Background}

Let the objective function be defined as $f: \dx \times \dz \rightarrow \mbb{R}$ where $\dx \subset \mbb{R}^{d_x}$ and $\dz\subset \mbb{R}^{d_z}$ are the bounded domain of the optimization variable $\mbf{x}$ and the support of the environmental random variable $\mbf{Z}$, respectively; $d_x$ and $d_z$ are the dimensions of $\mbf{x}$ and $\mbf{z}$, respectively. The support of $\mbf{Z}$ is defined as the smallest closed subset $\dz$ of $\mbb{R}^{d_z}$ such that $P(\mbf{Z} \in \dz) = 1$.
Let $\mbf{z} \in \dz$ denote a realization of the random variable $\mbf{Z}$. 
Let $f(\mbf{x},\mbf{Z})$ denote a random variable whose randomness comes from $\mbf{Z}$.
%
%
The {\var} of $f(\mbf{x},\mbf{Z})$ at \emph{risk level} $\alpha \in (0,1)$ is defined as:
\begin{equation}
V_{\alpha}(f(\mbf{x},\mbf{Z})) \triangleq \inf \{\omega: P(f(\mbf{x},\mbf{Z}) \le \omega) \ge \alpha\}
\label{eq:var}
\end{equation}
which implies the risk that $f(\mbf{x},\mbf{Z})$ is less than its {\var} at level $\alpha$ does not exceed $\alpha$. 


Our objective is to search for $\mbf{x} \in \dx$ that maximizes $V_\alpha(f(\mbf{x},\mbf{Z}))$ at a user-specified risk level $\alpha \in (0,1)$. 
Intuitively, the goal is find $\mbf{x}$ where the evaluations of the objective function are as large as possible under most realizations of the environmental random variable $\mbf{Z}$ which is characterized by the probability of $1-\alpha$.

The unknown objective function $f(\mbf{x},\mbf{z})$ is modeled with a GP. That is, every finite subset of $\{f(\mbf{x},\mbf{z})\}_{(\mbf{x}, \mbf{z}) \in \dx \times \dz}$ follows a multivariate Gaussian distribution \cite{rasmussen06}.
The GP is fully specified by its \emph{prior} mean and covariance function $k_{(\mbf{x},\mbf{z}), (\mbf{x}',\mbf{z}')} \triangleq \text{cov}[f(\mbf{x},\mbf{z}), f(\mbf{x}',\mbf{z}')]$ for all $\mbf{x}, \mbf{x}'$ in $\dx$ and $\mbf{z}, \mbf{z}'$ in $\dz$. For notational simplicity (and w.l.o.g.), the former is assumed to be zero, while we use the \emph{squared exponential} (SE) kernel as its bounded maximum information gain can be used for later analysis \cite{srinivas10ucb}.

To identify the optimal $\mbf{x}_* \triangleq \argmax_{\mbf{x} \in \dx} V_{\alpha}(f(\mbf{x},\mbf{Z}))$, BO algorithm selects an input query $(\mbf{x}_t,\mbf{z}_t)$ in the $t$-th iteration to obtain a noisy function evaluation $y_{(\mbf{x}_t,\mbf{z}_t)} \triangleq f(\mbf{x}_t,\mbf{z}_t) + \epsilon_t$ where $\epsilon_t \sim \mcl{N}(0,\sigma_n^2)$ are i.i.d. Gaussian noise with variance $\sigma_n^2$. Given noisy observations $\mbf{y}_{\mcl{D}_t} \triangleq (y_{(\mbf{x},\mbf{z})})_{(\mbf{x},\mbf{z}) \in \mcl{D}_t}^\top$ at observed inputs $\mcl{D}_t \triangleq \mcl{D}_{t-1} \cup \{(\mbf{x}_t, \mbf{z}_t)\}$ (and $\mcl{D}_0$ is the initial observed inputs), the GP posterior belief of function evaluation at any input $(\mbf{x},\mbf{z})$ is a Gaussian $p(f(\mbf{x},\mbf{z})|\mbf{y}_{\mcl{D}_t}) \triangleq \mcl{N}( f(\mbf{x},\mbf{z})|\mu_t(\mbf{x},\mbf{z}), \sigma_t^2(\mbf{x},\mbf{z}))$:
\begin{equation}
\begin{array}{r@{}l}
\mu_t(\mbf{x},\mbf{z}) &\triangleq \mbf{K}_{(\mbf{x},\mbf{z}),\mcl{D}_t} \bm{\Lambda}_{\mcl{D}_t\mcl{D}_t} \mbf{y}_{\mcl{D}_t}\ ,\\
\sigma_t^2(\mbf{x},\mbf{z}) &\triangleq k_{(\mbf{x},\mbf{z}),(\mbf{x},\mbf{z})}  - \mbf{K}_{(\mbf{x},\mbf{z}),\mcl{D}_t} \bm{\Lambda}_{\mcl{D}_t\mcl{D}_t} \mbf{K}_{\mcl{D}_t,(\mbf{x},\mbf{z})}
\label{eq:gppost}
\end{array}
\end{equation}
where $\bm{\Lambda}_{\mcl{D}_t\mcl{D}_t} \triangleq \left(
	\mbf{K}_{\mcl{D}_t\mcl{D}_t} + \sigma_n^2 \mbf{I}
\right)^{-1}$, 
$\mbf{K}_{(\mbf{x},\mbf{z}),\mcl{D}_t} \triangleq (k_{(\mbf{x},\mbf{z}),(\mbf{x}',\mbf{z}')})_{(\mbf{x}',\mbf{z}') \in \mcl{D}_t}$, 
$\mbf{K}_{\mcl{D}_t,(\mbf{x},\mbf{z})} \triangleq \mbf{K}_{(\mbf{x},\mbf{z}),\mcl{D}_t}^\top$,
$\mbf{K}_{\mcl{D}_t\mcl{D}_t} \triangleq (k_{(\mbf{x}',\mbf{z}'), (\mbf{x}'',\mbf{z}'')})_{(\mbf{x}',\mbf{z}'), (\mbf{x}'',\mbf{z}'') \in \mcl{D}_t}$,  $\mbf{I}$ is the identity matrix.


\section{BO of {\var}}
\label{sec:vy}

Following the seminal work \cite{srinivas10ucb}, we use the \emph{cumulative regret} as the performance metric to quantify the performance of our BO algorithm. It is defined as $R_T \triangleq \sum_{t=1}^T r(\mbf{x}_t)$ where $r(\mbf{x}_t) \triangleq V_{\alpha}(f(\mbf{x}_*,\mbf{Z})) - V_{\alpha}(f(\mbf{x}_t,\mbf{Z}))$ is the \emph{instantaneous regret} and $\mbf{x}_* \triangleq \argmax_{\mbf{x} \in \mcl{D}_{\mbf{x}}} V_{\alpha}(f(\mbf{x},\mbf{Z}))$. We would like to design a query selection strategy that incurs \emph{no regret}, i.e., $\lim_{T\rightarrow \infty} R_T / T = 0$. 
Furthermore, we have that 
$\min_{t \le T} r(\mbf{x}_t) \le R_T / T$, equivalently, $\max_{t \le T} V_{\alpha}(f(\mbf{x}_t,\mbf{Z})) \ge V_{\alpha}(f(\mbf{x}_*,\mbf{Z})) - R_T / T$.
Thus, $\text{lim}_{T \rightarrow \infty} \max_{t \le T} V_{\alpha}(f(\mbf{x}_t,\mbf{Z})) = V_{\alpha}(f(\mbf{x}_*,\mbf{Z}))$ for a no-regret algorithm.

The proof of the upper bound on the cumulative regret of GP-UCB is based on confidence bounds of the objective function \cite{srinivas10ucb}.
Similarly, in the next section, we start by constructing a confidence bound of $V_{\alpha}(f(\mbf{x},\mbf{Z}))$, which naturally leads to a query selection strategy for $\mbf{x}_t$.
%
\subsection{A Confidence Bound of $V_\alpha(f(\mbf{x},\mbf{Z}))$ and the Query Selection Strategy for $\mbf{x}_t$}
\label{subsec:confboundf}
Firstly, we adopt a confidence bound of the function $f(\mbf{x},\mbf{z})$ from \citet{chowdhury2017kernelized}, which assumes that $f$ belongs to a \emph{reproducing kernel Hilbert space} $\mcl{F}_k(B)$ such that its RKHS norm is bounded $\Vert f\Vert_k \le B$.

\begin{lemma}[\citet{chowdhury2017kernelized}]
Pick $\delta \in (0,1)$ and set 
$\beta_t = (B + \sigma_n \sqrt{2(\gamma_{t-1} + 1 + \log1/ \delta)})^2$. 
Then,
$f(\mbf{x},\mbf{z}) \in I_{t-1}[f(\mbf{x},\mbf{z})] \triangleq [l_{t-1}(\mbf{x},\mbf{z}), u_{t-1}(\mbf{x},\mbf{z})]$
$\forall \mbf{x} \in \mcl{D}_{\mbf{x}}, \mbf{z} \in \mcl{D}_{\mbf{z}}, t \ge 1 $ holds with probability $\ge 1 - \delta$ where
\begin{equation}
\begin{array}{r@{}l}
l_{t-1}(\mbf{x},\mbf{z})\ &\triangleq \mu_{t-1}(\mbf{x},\mbf{z}) - \beta_t^{1/2} \sigma_{t-1}(\mbf{x},\mbf{z})\\
u_{t-1}(\mbf{x},\mbf{z})\ &\triangleq \mu_{t-1}(\mbf{x},\mbf{z}) + \beta_t^{1/2} \sigma_{t-1}(\mbf{x},\mbf{z})\ .
\end{array}\vspace{-1mm}
\label{eq:fbound}
\end{equation}
\label{lemma:ucb51}
\end{lemma}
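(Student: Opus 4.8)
The statement is a verbatim restatement of the RKHS confidence bound of \citet{chowdhury2017kernelized}, so the natural plan is to re-derive it through the \emph{primal} (feature-space) view of the GP posterior, where $\beta_t^{1/2}$ appears as the sum of a deterministic ``bias'' contribution of size $B$ and a stochastic ``noise'' contribution controlled by a self-normalized martingale inequality. First I would fix a feature map $\phi$ with $k_{(\mbf{x},\mbf{z}),(\mbf{x}',\mbf{z}')} = \langle\phi(\mbf{x},\mbf{z}),\phi(\mbf{x}',\mbf{z}')\rangle$; since $\Vert f\Vert_k \le B$ there is a weight vector $\theta$ with $f(\mbf{x},\mbf{z}) = \langle\theta,\phi(\mbf{x},\mbf{z})\rangle$ and $\Vert\theta\Vert \le B$. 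Writing $\Phi_{t-1}$ for the operator that stacks the observed features $\phi(\mbf{x}_i,\mbf{z}_i)^\top$ ($i<t$) and $V_{t-1} \triangleq \Phi_{t-1}^\top\Phi_{t-1} + \lambda\mbf{I}$ for the regularized feature covariance (with $\lambda$ the ridge parameter defining \eqref{eq:gppost}), I would first record the two identities that identify the kernel-form posterior with a regularized least-squares estimate, both obtained from the matrix inversion lemma: $\mu_{t-1}(\mbf{x},\mbf{z}) = \langle V_{t-1}^{-1}\Phi_{t-1}^\top\mbf{y}_{\mcl{D}_{t-1}},\phi(\mbf{x},\mbf{z})\rangle$ and $\Vert\phi(\mbf{x},\mbf{z})\Vert_{V_{t-1}^{-1}} \propto \sigma_{t-1}(\mbf{x},\mbf{z})$.

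Next I would substitute $\mbf{y}_{\mcl{D}_{t-1}} = \Phi_{t-1}\theta + \bm{\epsilon}_{t-1}$ into $\mu_{t-1}$ to obtain the exact decomposition
\begin{equation}
f(\mbf{x},\mbf{z}) - \mu_{t-1}(\mbf{x},\mbf{z}) = \lambda\langle\phi(\mbf{x},\mbf{z}),V_{t-1}^{-1}\theta\rangle - \langle\phi(\mbf{x},\mbf{z}),V_{t-1}^{-1}\Phi_{t-1}^\top\bm{\epsilon}_{t-1}\rangle ,
\end{equation}
and bound each term in the $V_{t-1}^{-1}$-weighted inner product. For the first (bias) term, $V_{t-1}\succeq\lambda\mbf{I}$ gives $\Vert\theta\Vert_{V_{t-1}^{-1}} \le \Vert\theta\Vert/\sqrt{\lambda} \le B/\sqrt{\lambda}$, so Cauchy--Schwarz together with the variance identity yields the clean bound $B\,\sigma_{t-1}(\mbf{x},\mbf{z})$, contributing the $B$ inside $\beta_t^{1/2}$. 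The second (noise) term factors, again by Cauchy--Schwarz, as $\Vert\phi(\mbf{x},\mbf{z})\Vert_{V_{t-1}^{-1}}\,\Vert S_{t-1}\Vert_{V_{t-1}^{-1}}$ with the vector-valued martingale $S_{t-1} \triangleq \sum_{i<t}\epsilon_i\,\phi(\mbf{x}_i,\mbf{z}_i)$, the first factor again being proportional to $\sigma_{t-1}(\mbf{x},\mbf{z})$.

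The technical heart is to control $\Vert S_{t-1}\Vert_{V_{t-1}^{-1}}$ uniformly over $t$ via the self-normalized (method-of-mixtures) tail bound for vector-valued martingales of \citet{chowdhury2017kernelized}: with probability $\ge 1-\delta$, and simultaneously for all $t$, $\Vert S_{t-1}\Vert_{V_{t-1}^{-1}}$ is bounded by $\sigma_n$ times the square root of (a constant multiple of) a log-determinant of the regularized Gram matrix $\mbf{K}_{\mcl{D}_{t-1}\mcl{D}_{t-1}}$ plus $\log(1/\delta)$. Since the maximum information gain $\gamma_{t-1}$ upper-bounds exactly this log-determinant, the noise contribution assembles to $\sigma_n\sqrt{2(\gamma_{t-1}+1+\log 1/\delta)}\,\sigma_{t-1}(\mbf{x},\mbf{z})$ under the ridge and information-gain normalization of \citet{chowdhury2017kernelized}. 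Adding the bias term then gives $|f(\mbf{x},\mbf{z})-\mu_{t-1}(\mbf{x},\mbf{z})| \le \beta_t^{1/2}\sigma_{t-1}(\mbf{x},\mbf{z})$, i.e.\ $f(\mbf{x},\mbf{z}) \in [l_{t-1}(\mbf{x},\mbf{z}),u_{t-1}(\mbf{x},\mbf{z})]$. Because the Cauchy--Schwarz steps hold for every $(\mbf{x},\mbf{z})$ at once and the martingale bound is uniform in $t$, the inclusion holds for all $\mbf{x}\in\dx$, $\mbf{z}\in\dz$, $t\ge 1$ on a single probability-$(1-\delta)$ event.

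I expect the main obstacle to be precisely this self-normalized step in a possibly infinite-dimensional feature space: making the method-of-mixtures argument rigorous---placing a Gaussian prior on $\theta$, verifying that the associated exponential process is a supermartingale, integrating it out, and passing to the Hilbert-space limit while keeping the log-determinant finite and bounded by $\gamma_{t-1}$---is the delicate part, and it is exactly the component I would import from \citet{chowdhury2017kernelized} rather than reprove. A secondary, purely bookkeeping point is reconciling the ridge parameter of \eqref{eq:gppost} with the information-gain normalization so that the constants in $\beta_t$ (in particular the additive ``$+1$'', which comes from a refined choice of the mixture parameter) emerge exactly.
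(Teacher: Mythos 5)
Your proposal is correct and coincides with the paper's treatment: the paper offers no proof of this lemma at all, importing it verbatim from \citet{chowdhury2017kernelized}, and your sketch faithfully reconstructs that reference's own argument (the primal bias/noise decomposition, Cauchy--Schwarz against $\Vert\phi(\mbf{x},\mbf{z})\Vert_{V_{t-1}^{-1}}\propto\sigma_{t-1}(\mbf{x},\mbf{z})$, and the kernelized self-normalized martingale bound whose log-determinant is controlled by $\gamma_{t-1}$). Since you defer the genuinely hard step---the infinite-dimensional method-of-mixtures concentration and the bookkeeping that produces the additive $+1$---to the very same source the paper cites, your route does not differ from what the paper relies on.
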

As the above lemma holds for both finite and continuous $\mcl{D}_x$ and $\mcl{D}_z$, it is 
used
to analyse the regret in both cases. On the other hand, the confidence bound can be adopted to the Bayesian setting by changing only $\beta_t$ following the work of \citet{srinivas10ucb} as noted by \cite{bogunovic2018adversarially}.

Then, we exploit this confidence bound on the function evaluations (Lemma \ref{lemma:ucb51}) to formulate a confidence bound of $V_{\alpha}(f(\mbf{x},\mbf{Z}))$ as follows.
\begin{lemma}
\label{lemma:confbound}
Similar to the definition of $f(\mbf{x},\mbf{Z})$, let $l_{t-1}(\mbf{x}, \mbf{Z})$ and $u_{t-1}(\mbf{x},\mbf{Z})$ denote the random function over $\mbf{x}$ where the randomness comes from the random variable $\mbf{Z}$; $l_{t-1}$ and $u_{t-1}$ are defined in \eqref{eq:fbound}.
%
%
%
Then, $\forall \mbf{x} \in \mcl{D}_{\mbf{x}}$, $t \ge 1$,
\[
\begin{array}{r@{}l}
\displaystyle V_\alpha(f(\mbf{x},\mbf{Z})) \displaystyle &\displaystyle \in I_{t-1}[V_{\alpha}(f(\mbf{x},\mbf{Z}))]\\
	 &\displaystyle \triangleq [V_\alpha(l_{t-1}(\mbf{x},\mbf{Z})), V_\alpha(u_{t-1}(\mbf{x},\mbf{Z}))]
\end{array}
\]
holds with probability $\ge 1 - \delta$ for $\delta$ in Lemma~\ref{lemma:ucb51}, where $V_\alpha(l_{t-1}(\mbf{x},\mbf{Z}))$ and $V_\alpha(u_{t-1}(\mbf{x},\mbf{Z}))$ are defined as \eqref{eq:var}.
\end{lemma}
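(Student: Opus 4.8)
The plan is to reduce the lemma to a single structural property of \var: its \emph{monotonicity} with respect to almost-sure ordering of random variables. Concretely, I would first establish the elementary fact that if $X \le Y$ almost surely then $V_\alpha(X) \le V_\alpha(Y)$, and then argue that the confidence bound of Lemma~\ref{lemma:ucb51}, being uniform over all $\mbf{z} \in \dz$, furnishes exactly such an almost-sure ordering between $l_{t-1}(\mbf{x},\mbf{Z})$, $f(\mbf{x},\mbf{Z})$, and $u_{t-1}(\mbf{x},\mbf{Z})$ on a high-probability event.

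For the monotonicity fact, suppose $X \le Y$ almost surely. Then for every threshold $\omega$ the event $\{Y \le \omega\}$ is contained in $\{X \le \omega\}$, so $P(X \le \omega) \ge P(Y \le \omega)$. Consequently the super-level set $\{\omega : P(Y \le \omega) \ge \alpha\}$ is a subset of $\{\omega : P(X \le \omega) \ge \alpha\}$, and taking infima in the definition~\eqref{eq:var} yields $V_\alpha(X) \le V_\alpha(Y)$. This is the only property of \var\ I need.

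Next I would condition on the event $\mcl{E}$ of Lemma~\ref{lemma:ucb51}, which has probability at least $1 - \delta$ and on which $l_{t-1}(\mbf{x},\mbf{z}) \le f(\mbf{x},\mbf{z}) \le u_{t-1}(\mbf{x},\mbf{z})$ holds \emph{simultaneously} for all $\mbf{x} \in \dx$, all $\mbf{z} \in \dz$, and all $t \ge 1$. The key observation is that the randomness defining $\mcl{E}$ (arising from the observation noise and the RKHS confidence argument) is entirely separate from the randomness of $\mbf{Z}$. Thus, once we are on $\mcl{E}$, the inequalities hold for \emph{every} realization $\mbf{z}$ in the support $\dz$; in particular, replacing the deterministic $\mbf{z}$ by the random variable $\mbf{Z}$, we obtain that $l_{t-1}(\mbf{x},\mbf{Z}) \le f(\mbf{x},\mbf{Z}) \le u_{t-1}(\mbf{x},\mbf{Z})$ holds $\mbf{Z}$-almost surely, for each fixed $\mbf{x}$ and $t$.

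Finally I would apply the monotonicity fact twice to this almost-sure sandwich, obtaining $V_\alpha(l_{t-1}(\mbf{x},\mbf{Z})) \le V_\alpha(f(\mbf{x},\mbf{Z})) \le V_\alpha(u_{t-1}(\mbf{x},\mbf{Z}))$ for all $\mbf{x} \in \dx$ and $t \ge 1$ on $\mcl{E}$, which is exactly the claimed membership in $I_{t-1}[V_\alpha(f(\mbf{x},\mbf{Z}))]$; since $\mcl{E}$ occurs with probability at least $1 - \delta$, the conclusion inherits that probability. The main obstacle to watch is precisely the bookkeeping of the two distinct probability spaces: one must keep the $1-\delta$ guarantee attached to the confidence-bound randomness while the \var\ (and the phrase ``almost surely'') is taken over $\mbf{Z}$, and note that the uniform-in-$\mbf{z}$ nature of Lemma~\ref{lemma:ucb51} is what licenses passing from pointwise bounds to a $\mbf{Z}$-almost-sure sandwich without incurring any additional failure probability.
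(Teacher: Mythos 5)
Your proposal is correct and takes essentially the same route as the paper's proof: both condition on the uniform confidence-bound event of Lemma~\ref{lemma:ucb51} and then pass from the pointwise sandwich $l_{t-1}(\mbf{x},\mbf{z}) \le f(\mbf{x},\mbf{z}) \le u_{t-1}(\mbf{x},\mbf{z})$ to the {\var} ordering via the same chain of implications (event containment for each threshold $\omega$, containment of the super-level sets $\{\omega : P(\cdot \le \omega) \ge \alpha\}$, and comparison of infima). The only difference is organizational: you isolate the monotonicity of $V_\alpha$ under almost-sure ordering as a standalone fact and apply it twice, whereas the paper carries out the identical reasoning inline for the pair $(l_{t-1}, f)$ and invokes symmetry for $(f, u_{t-1})$.
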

The proof is in Appendix~\ref{app:proofvconfbound}.
Given the confidence bound $I_{t-1}[V_{\alpha}(f(\mbf{x},\mbf{Z}))] \triangleq [V_\alpha(l_{t-1}(\mbf{x},\mbf{Z})), V_\alpha(u_{t-1}(\mbf{x},\mbf{Z}))$ in Lemma~\ref{lemma:confbound}, we follow the the well-known 
``optimism in the face of uncertainty'' 
principle to select $\mbf{x}_t = \argmax_{\mbf{x} \in \dx} V_\alpha(u_{t-1}(\mbf{x},\mbf{Z}))$. This query selection strategy for $\mbf{x}_t$ leads to an upper bound of $r(\mbf{x}_t)$:
\begin{align}
r(\mbf{x}_t) \le V_\alpha(u_{t-1}(\mbf{x}_t,\mbf{Z})) - V_\alpha(l_{t-1}(\mbf{x}_t,\mbf{Z}))\ \forall t \ge 1
	\label{eq:iregretbound1}
\end{align}
which holds with probability $\ge 1 - \delta$ for $\delta$ in Lemma~\ref{lemma:ucb51}, and is proved in Appendix~\ref{app:iregretbound1}.


As our goal is $\lim_{T\rightarrow \infty} R_T/T = 0$, given the selected query $\mbf{x}_t$, a reasonable query selection strategy of $\mbf{z}_t$ should gather informative observations at $(\mbf{x}_t, \mbf{z}_t)$ that improves the confidence bound $I_{t-1}[V_\alpha (f(\mbf{x}_t,\mbf{Z}))]$ (i.e., $I_{t}[V_\alpha (f(\mbf{x}_t,\mbf{Z}))]$ is a proper subset of $I_{t-1}[V_\alpha (f(\mbf{x}_t,\mbf{Z}))]$ if $I_{t-1}[V_\alpha (f(\mbf{x}_t,\mbf{Z}))] \neq \emptyset$) which can be viewed as the uncertainty of $V_{\alpha}(f(\mbf{x}_t,\mbf{Z}))$.


Assume that there exists
$\mbf{z}_l \in \mcl{D}_{\mbf{z}}$ such that  $l_{t-1}(\mbf{x}_t,\mbf{z}_l) = V_\alpha(l_{t-1}(\mbf{x}_t,\mbf{Z}))$
and
$\mbf{z}_u \in \mcl{D}_{\mbf{z}}$ such that  $u_{t-1}(\mbf{x}_t,\mbf{z}_u) = V_\alpha(u_{t-1}(\mbf{x}_t,\mbf{Z}))$.
Lemma~\ref{lemma:confbound} implies that $V_\alpha(f(\mbf{x}_t,\mbf{Z})) \in I_{t-1}[V_\alpha (f(\mbf{x}_t,\mbf{Z}))] = [l_{t-1}(\mbf{x}_t,\mbf{z}_l), u_{t-1}(\mbf{x}_t,\mbf{z}_u)]$ with high probability. 
Hence, we may 
na\"ively
want to query for observations at $(\mbf{x}_t, \mbf{z}_l)$ and $(\mbf{x}_t,\mbf{z}_u)$ to reduce $I_{t-1}[V_\alpha (f(\mbf{x}_t,\mbf{Z}))]$. 
However, these observations may not always reduce $I_{t-1}[V_\alpha (f(\mbf{x}_t,\mbf{Z}))]$. 
The insight is that $I_{t-1}[V_\alpha (f(\mbf{x}_t,\mbf{Z}))]$ changes (i.e., shrinks) when either of its boundary values (i.e., $l_{t-1}(\mbf{x}_t,\mbf{z}_l)$ or $u_{t-1}(\mbf{x}_t,\mbf{z}_u)$) changes. 
Consider $u_{t-1}(\mbf{x}_t,\mbf{z}_u)$ and finite $\dz$ as an example, since $u_{t-1}(\mbf{x}_t,\mbf{z}_u) = V_\alpha(u_{t-1}(\mbf{x}_t,\mbf{Z}))$, 
a natural cause for the change in $u_{t-1}(\mbf{x}_t,\mbf{z}_u)$ is when $\mbf{z}_u$ changes. 
This happens if there exists $\mbf{z}' \neq \mbf{z}_u$ such that the \emph{ordering} of $u_{t-1}(\mbf{x}_t,\mbf{z}')$ relative to $u_{t-1}(\mbf{x}_t,\mbf{z}_u)$ changes given more observations. Thus, observations that are capable of reducing $I_{t-1}[V_\alpha (f(\mbf{x}_t,\mbf{Z}))]$ should be able to \emph{change the relative ordering} in this case.
We construct the following counterexample where observations at $\mbf{z}_u$ (and $\mbf{z}_l$) are not able to change the relative ordering, so they do not reduce $I_{t-1}[V_\alpha (f(\mbf{x}_t,\mbf{Z}))]$.

\begin{example}
\label{example:counter}
This example is described by Fig.~\ref{fig:orderinguncertainty}. 
We reduce notational clutter by removing $\mbf{x}_t$ and $t$ since they are fixed in this example, i.e., we use $f(\mbf{z})$, $f(\mbf{Z})$, and $l(\mbf{z})$ to denote $f(\mbf{x}_t, \mbf{z})$, $f(\mbf{x}_t, \mbf{Z})$, and $l_{t-1}(\mbf{x}_t, \mbf{z})$ respectively. 
We condition on the event $f(\mbf{z}) \in I[f(\mbf{z})] \triangleq [l(\mbf{z}), u(\mbf{z})]$ for all $\mbf{z} \in \dz$ which occurs with probability $\ge 1 - \delta$ in Lemma~\ref{lemma:ucb51}. In this example, $\mbf{z}_l = \mbf{z}_1$ and $l(\mbf{z}_1) = u(\mbf{z}_1)$, so there is no uncertainty in $f(\mbf{z}_l) = f(\mbf{z}_1)$. Similarly, there is no uncertainty in $f(\mbf{z}_u) = f(\mbf{z}_2)$. Thus, new observations at $\mbf{z}_l$ and $\mbf{z}_u$ change neither $l(\mbf{z}_l)$ nor $u(\mbf{z}_u)$, so these observations do not reduce the confidence bound $I[V_{\alpha=0.4}(f(\mbf{Z}))] = [l(\mbf{z}_l), u(\mbf{z}_u)]$ (plotted as the double-headed arrow in Fig.~\ref{fig:orderinguncertainty}b).
In fact, to reduce $I[V_{\alpha=0.4}(f(\mbf{Z}))]$, we should gather new observations at $\mbf{z}_0$ which potentially change the ordering of $u(\mbf{z}_0)$ relative to $u(\mbf{z}_2)$ (which is $u(\mbf{z}_u)$ without new observations). For example, after 
getting
new observations at $\mbf{z}_0$, if $u(\mbf{z}_0)$ is improved to be in the white region between A and B ($u(\mbf{z}_0) > u(\mbf{z}_2)$ in Fig.~\ref{fig:orderinguncertainty}b changes to $u(\mbf{z}_0) < u(\mbf{z}_2)$ in Fig.~\ref{fig:orderinguncertainty}c), then $I[V_{\alpha=0.4}(f(\mbf{Z}))]$ is reduced to $[l(\mbf{z}_1), u(\mbf{z}_0)]$ 
because now $\mbf{z}_u=\mbf{z}_0$.
Thus, as the confidence bound $I[f(\mbf{z}_0)]$ is shortened with more and more observations at $\mbf{z}_0$, the confidence bound $I[V_{\alpha=0.4}(f(\mbf{Z}))]$ reduces  (the white region in Fig.~\ref{fig:orderinguncertainty} is `laced up').
\end{example}
%
%
\begin{figure*}[h!]
\centering
\begin{tabular}{@{}ccc@{}}
\includegraphics[width=0.25\textwidth]{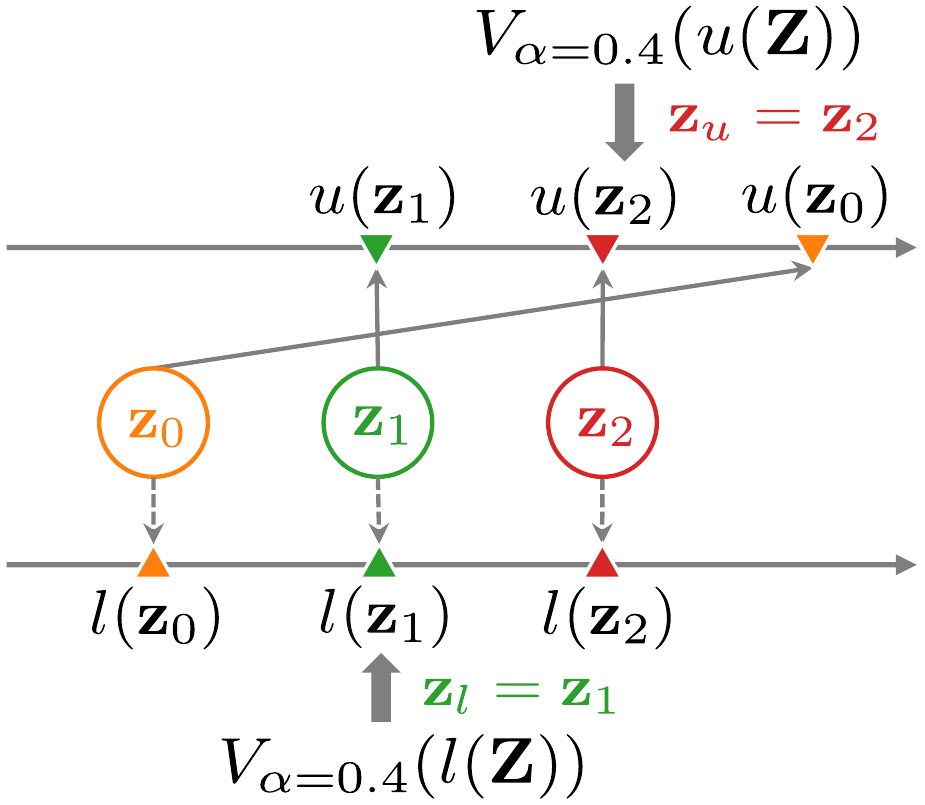}
&
\includegraphics[width=0.25\textwidth]{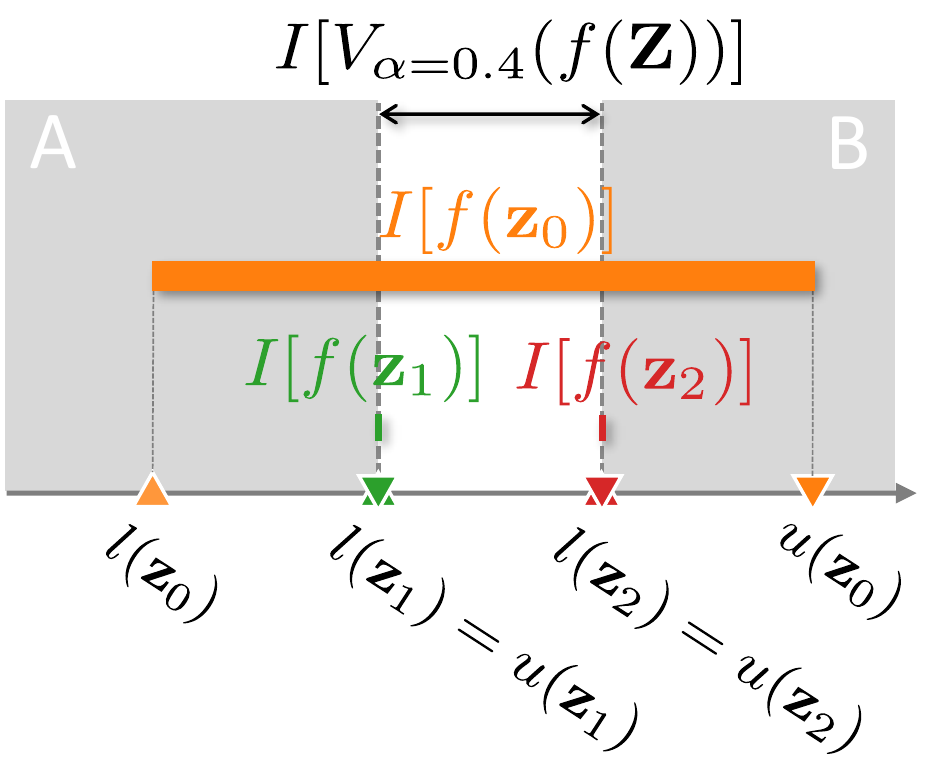}
&
\includegraphics[width=0.25\textwidth]{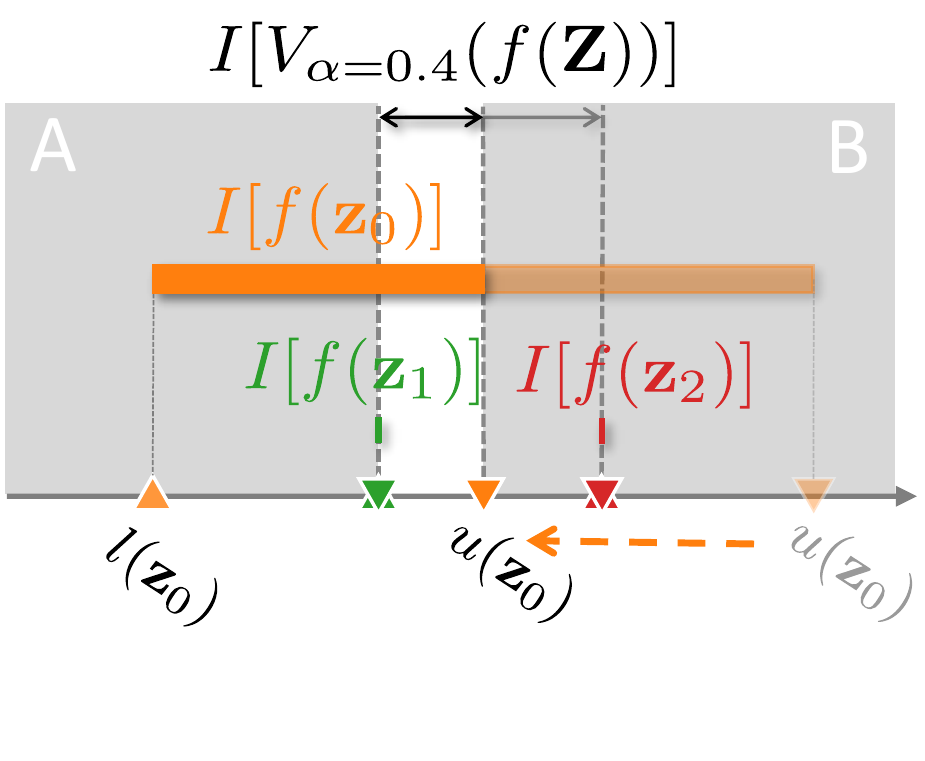}
\\
(a) & (b) & (c)
\end{tabular}
\caption{
A counterexample against selecting $\mbf{z}_u$ and $\mbf{z}_l$ as input queries.
Here $\mbf{z}$ follows a discrete uniform distribution over $\dz \triangleq \{\mbf{z}_0, \mbf{z}_1, \mbf{z}_2\}$.
(a) shows the mappings of $\mbf{z}$ to the upper bound $u(\mbf{z})$ and lower bound $l(\mbf{z})$.
The {\var} at $\alpha=0.4$ of $u(\mbf{Z})$ and $l(\mbf{Z})$ are $u(\mbf{z}_2)$ and $l(\mbf{z}_1)$, respectively, i.e., $\mbf{z}_u = \mbf{z}_2$ and $\mbf{z}_l = \mbf{z}_1$.
(b) shows the values of $l(\mbf{z})$ and $u(\mbf{z})$ for all $\mbf{z}$ on the same axis, as well as the confidence bounds of $f(\mbf{z})$ and $V_{\alpha}(f(\mbf{Z}))$. The gray areas A and B indicate the intervals of values $\omega \in \mbb{R}$ where $\omega \le l(\mbf{z}_l) = l(\mbf{z}_1)$ and $\omega \ge u(\mbf{z}_u) = u(\mbf{z}_2)$, respectively. 
(c) shows a hypothetical scenario when $I[f(\mbf{z}_0)]$ is shortened with more observations at $\mbf{z}_0$.
}
\label{fig:orderinguncertainty}
\end{figure*}

In the next section, we define a property of $\mbf{z}_0$ in Example~\ref{example:counter} 
and prove the existence of $\mbf{z}$'s with this property.
Then, we prove that along with the optimistic selection of $\mbf{x}_t$, the selection of $\mbf{z}_t$ such that it satisfies this property leads to a no-regret algorithm.

\subsection{Lacing Value (LV) and the Query Selection Strategy for $\mbf{z}_t$}
\label{subsec:lv}

We note that in Example~\ref{example:counter}, as long as the confidence bound of the function evaluation at $\mbf{z}_0$ contains 
the confidence bound
of {\var}, observations at $\mbf{z}_0$ can 
reduce the confidence bound of {\var}. We name the values of $\mbf{z}$ satisfying this property as \emph{lacing values} (LV):

\begin{definition}[Lacing values]
\label{definition:lv}
\emph{Lacing values} (LV) with respect to $\mbf{x} \in \dx$ and $t \ge 1$ are $\zlv \in \dz$ that satisfies $l_{t-1}(\mbf{x},\zlv) \le V_{\alpha}(l_{t-1}(\mbf{x}, \mbf{Z})) \le V_{\alpha}(u_{t-1}(\mbf{x}, \mbf{Z})) \le u_{t-1}(\mbf{x},\zlv)$, equivalently,
$I_{t-1}[V_{\alpha}(f(\mbf{x},\mbf{Z}))] \subset [l_{t-1}(\mbf{x},\zlv), u_{t-1}(\mbf{x},\zlv)]\ .$
\end{definition}

Recall that the support $\dz$ of $\mbf{Z}$ is defined as the smallest closed subset $\dz$ of $\mbb{R}^{d_z}$ such that $P(\mbf{Z} \in \dz) = 1$ (e.g., $\dz$ is a finite subset of $\mbb{R}^{d_z}$ and $\dz = \mbb{R}^{d_z}$). 
The following theorem guarantees the existence of lacing values and is proved in Appendix~\ref{app:prooflv}.

\begin{theorem}[Existence of lacing values]
\label{theorem:lv}
$\forall \alpha \in (0,1)$, $\forall \mbf{x} \in \dx$, $\forall t \ge 1$, there exists a lacing value in $\dz$ with respect to $\mbf{x}$ and $t$.
\end{theorem}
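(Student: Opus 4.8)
The plan is to fix $\mbf{x} \in \dx$ and $t \ge 1$ and suppress them from the notation, writing $l(\mbf{z}) \triangleq l_{t-1}(\mbf{x},\mbf{z})$, $u(\mbf{z}) \triangleq u_{t-1}(\mbf{x},\mbf{z})$, and setting $a \triangleq V_\alpha(l(\mbf{Z}))$ and $b \triangleq V_\alpha(u(\mbf{Z}))$. By Definition~\ref{definition:lv}, a lacing value is exactly a $\mbf{z} \in \dz$ with $l(\mbf{z}) \le a$ and $u(\mbf{z}) \ge b$. Note first that since $\sigma_{t-1} \ge 0$ we have $l(\mbf{z}) \le u(\mbf{z})$ pointwise, so $\{u(\mbf{Z}) \le \omega\} \subseteq \{l(\mbf{Z}) \le \omega\}$ for every $\omega$, and monotonicity of {\var} in \eqref{eq:var} gives $a \le b$. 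Hence it suffices to exhibit a single $\mbf{z}$ whose function-value confidence interval $[l(\mbf{z}), u(\mbf{z})]$ straddles the whole interval $[a,b]$.

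First I would translate the two {\var}'s into statements about the law of $\mbf{Z}$. Because the defining set in \eqref{eq:var} is an infimum and the CDF is right-continuous, the $\alpha$-quantile $a$ satisfies $P(l(\mbf{Z}) \le a) \ge \alpha$; dually, for every $\omega < b$ we have $P(u(\mbf{Z}) \le \omega) < \alpha$, whence $P(u(\mbf{Z}) < b) \le \alpha$, i.e. $P(u(\mbf{Z}) \ge b) \ge 1-\alpha$. Introduce the sets $A \triangleq \{\mbf{z} \in \dz : l(\mbf{z}) \le a\}$ and $B \triangleq \{\mbf{z} \in \dz : u(\mbf{z}) \ge b\}$, which are closed as preimages of closed rays under the continuous posterior maps $l$ and $u$, and which carry probability $P(\mbf{Z} \in A) \ge \alpha > 0$ and $P(\mbf{Z} \in B) \ge 1-\alpha > 0$; in particular both are nonempty. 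The statement that a lacing value exists is precisely the claim $A \cap B \neq \emptyset$.

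The core step is to prove $A \cap B \neq \emptyset$ by contradiction. Suppose $A \cap B = \emptyset$; then $u(\mbf{z}) < b$ for every $\mbf{z} \in A$, so $s \triangleq \sup_{\mbf{z} \in A} u(\mbf{z}) \le b$. If $s < b$, then $A \subseteq \{u(\mbf{Z}) \le s\}$ forces $P(u(\mbf{Z}) \le s) \ge P(\mbf{Z} \in A) \ge \alpha$, so $V_\alpha(u(\mbf{Z})) \le s < b$, contradicting $b = V_\alpha(u(\mbf{Z}))$. If instead $s = b$, then continuity of $u$ together with compactness of the closed set $A$ makes the supremum a maximum, attained at some $\mbf{z}^\star \in A$ with $u(\mbf{z}^\star) = b \ge b$; hence $\mbf{z}^\star \in B$, again contradicting $A \cap B = \emptyset$. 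Either way we reach a contradiction, so $A \cap B \neq \emptyset$ and a lacing value exists. This argument is in fact constructive in the tight regime: the lacing value is the maximizer of $u$ over $A = \{l \le a\}$.

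The main obstacle I anticipate is the boundary (tight) case $s = b$, where the two endpoints of $I_{t-1}[V_\alpha(f(\mbf{x},\mbf{Z}))]$ are witnessed at different environmental values (exactly the situation of Example~\ref{example:counter}); handling it cleanly requires that the supremum of the continuous function $u$ over the closed set $A$ be attained, which I would justify by compactness when $\dz$ is bounded (and, in the finite case, trivially, since $A$ is finite). When $\dz$ is unbounded, e.g. $\dz = \mbb{R}^{d_z}$, this attainment needs the extra observation that, under the SE kernel, $u$ is bounded and its behaviour at infinity is controlled by the prior, so the relevant extremum is still realized within $\dz$; I expect this technical point, rather than the probabilistic inequalities, to be where the care is needed. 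A secondary point to get right is the precise interplay of strict versus non-strict inequalities in the quantile identities above, which hinge on the infimum in \eqref{eq:var} and the right-continuity of the CDFs of $l(\mbf{Z})$ and $u(\mbf{Z})$.
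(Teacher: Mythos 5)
Your proposal is correct and takes essentially the same route as the paper's proof: the paper likewise works with the sub-level set $A=\mcl{Z}_l^{\le}=\{\mbf{z}\in\dz: l_{t-1}(\mbf{x},\mbf{z})\le V_\alpha(l_{t-1}(\mbf{x},\mbf{Z}))\}$, shows $P(\mbf{Z}\in A)\ge\alpha$, and derives a contradiction from the supremum of $u_{t-1}(\mbf{x},\cdot)$ over this closed set together with the infimum property of {\var}. The attainment issue you flag at the boundary case is genuine, and it is exactly the step the paper itself glosses over --- it asserts the supremum is attained at some $\mbf{z}_l^+\in\mcl{Z}_l^\le$ ``because $\mcl{Z}_l^\le$ is closed,'' which does not follow for unbounded $\dz$ (a continuous function on a closed unbounded set need not attain its supremum); your split into the cases $s<b$ (no attainment needed) and $s=b$ (attainment needed) actually localizes this shared gap more honestly than the paper does.
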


\begin{corollary}
\label{corollary:loclv}
Lacing values with respect to $\mbf{x} \in \dx$ and $t \ge 1$ are in $\mcl{Z}_l^{\le} \cap \mcl{Z}_u^{\ge}$ where $\mcl{Z}_l^{\le} \triangleq \{\mbf{z} \in \mcl{D}_{\mbf{z}}: l_{t-1}(\mbf{x},\mbf{z}) \le V_{\alpha}(l_{t-1}(\mbf{x}, \mbf{Z}))\}$ and $\mcl{Z}_u^{\ge} \triangleq \{\mbf{z} \in \mcl{D}_{\mbf{z}}: u_{t-1}(\mbf{x},\mbf{z}) \ge V_{\alpha}(u_{t-1}(\mbf{x}, \mbf{Z}))\}$.
\end{corollary}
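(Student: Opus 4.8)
The plan is to observe that Corollary~\ref{corollary:loclv} follows immediately by unpacking Definition~\ref{definition:lv}: the two sets $\mcl{Z}_l^{\le}$ and $\mcl{Z}_u^{\ge}$ are precisely carved out by the two endpoint inequalities in the defining chain of a lacing value.

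First I would fix $\mbf{x} \in \dx$ and $t \ge 1$, and take an arbitrary lacing value $\zlv \in \dz$ with respect to $\mbf{x}$ and $t$. By Definition~\ref{definition:lv}, $\zlv$ satisfies
\[
l_{t-1}(\mbf{x},\zlv) \le V_\alpha(l_{t-1}(\mbf{x},\mbf{Z})) \le V_\alpha(u_{t-1}(\mbf{x},\mbf{Z})) \le u_{t-1}(\mbf{x},\zlv).
\]
Next I would read off the two outer inequalities separately. The leftmost inequality $l_{t-1}(\mbf{x},\zlv) \le V_\alpha(l_{t-1}(\mbf{x},\mbf{Z}))$ is exactly the membership condition defining $\mcl{Z}_l^{\le}$, hence $\zlv \in \mcl{Z}_l^{\le}$. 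Likewise, the rightmost inequality $V_\alpha(u_{t-1}(\mbf{x},\mbf{Z})) \le u_{t-1}(\mbf{x},\zlv)$ is exactly the condition defining $\mcl{Z}_u^{\ge}$, hence $\zlv \in \mcl{Z}_u^{\ge}$. Intersecting the two memberships gives $\zlv \in \mcl{Z}_l^{\le} \cap \mcl{Z}_u^{\ge}$, and since $\zlv$ was arbitrary this establishes the claimed containment.

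I do not expect a genuine obstacle here, as the corollary is a direct logical consequence of the definition; the only point to keep straight is that the claim is a one-sided containment (every lacing value lies in the intersection) rather than an equality, so I would not attempt the converse inclusion. The middle inequality $V_\alpha(l_{t-1}(\mbf{x},\mbf{Z})) \le V_\alpha(u_{t-1}(\mbf{x},\mbf{Z}))$ plays no role in the argument and may be ignored; it holds automatically because $l_{t-1} \le u_{t-1}$ pointwise (as $\sigma_{t-1} \ge 0$) and $V_\alpha$ is monotone.
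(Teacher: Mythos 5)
Your proof is correct and matches the paper's treatment: the paper states Corollary~\ref{corollary:loclv} without a separate proof precisely because it is immediate from Definition~\ref{definition:lv}, and your unpacking of the two outer inequalities in the defining chain is exactly that intended argument. Your side remark that the middle inequality follows from pointwise $l_{t-1} \le u_{t-1}$ and monotonicity of $V_\alpha$ is also accurate, though as you note it is not needed.
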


As a special case, when $\mbf{z}_l = \mbf{z}_u$,  $I_{t-1}[V_\alpha(f(\mbf{x},\mbf{Z}))] = I_{t-1}[f(\mbf{x},\mbf{z}_l)]$ which means $\mbf{z}_l = \mbf{z}_u$ is an LV.
%
Based on Theorem~\ref{theorem:lv}, we can always select $\mbf{z}_t$ as an LV w.r.t $\mbf{x}_t$ defined in Definition~\ref{definition:lv}.
This strategy for the selection of $\mbf{z}_t$, together with the selection of $\mbf{x}_t = \argmax_{\mbf{x} \in \dx} V_\alpha(u_{t-1}(\mbf{x},\mbf{Z}))$ (Section \ref{subsec:confboundf}), completes our algorithm: \emph{{\var} Upper Confidence Bound} ({\vy}) (Algorithm~\ref{alg:v-ucb}).

\textbf{Upper Bound on Regret.}
As a result of the selection strategies for $\mbf{x}_t$ and $\mbf{z}_t$, our V-UCB algorithm enjoys the following upper bound on its instantaneous regret (proven in Appendix~\ref{app:iregretbound2}):
\begin{lemma}
\label{lemma:iregretbound2}
By selecting $\mbf{x}_t$ as a maximizer of $V_\alpha(u_{t-1}(\mbf{x},\mbf{Z}))$ and selecting $\mbf{z}_t$ as an LV w.r.t $\mbf{x}_t$, the instantaneous regret is upper-bounded by:
\begin{equation*}
r(\mbf{x}_t) \le 2 \beta_t^{1/2} \sigma_{t-1}(\mbf{x}_t, \mbf{z}_t)\ \forall t \ge 1
\end{equation*}
with probability $\ge 1 - \delta$ for $\delta$ in Lemma~\ref{lemma:ucb51}.
\end{lemma}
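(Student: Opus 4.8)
The plan is to prove the bound by chaining together three facts that are already in place, since the conceptual heavy lifting has been done in setting up the lacing-value machinery. The starting point is the instantaneous-regret bound \eqref{eq:iregretbound1}, which, thanks to the optimistic choice $\mbf{x}_t = \argmax_{\mbf{x} \in \dx} V_\alpha(u_{t-1}(\mbf{x},\mbf{Z}))$, already gives
\[
r(\mbf{x}_t) \le V_\alpha(u_{t-1}(\mbf{x}_t,\mbf{Z})) - V_\alpha(l_{t-1}(\mbf{x}_t,\mbf{Z}))
\]
on the high-probability event of Lemma~\ref{lemma:ucb51}. Thus it suffices to bound the \emph{width} of the {\var} confidence interval $I_{t-1}[V_\alpha(f(\mbf{x}_t,\mbf{Z}))]$ by $2\beta_t^{1/2}\sigma_{t-1}(\mbf{x}_t,\mbf{z}_t)$.

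For that, I would invoke the defining property of an LV. Since $\mbf{z}_t$ is selected as a lacing value with respect to $\mbf{x}_t$, Definition~\ref{definition:lv} gives the ordering
\[
l_{t-1}(\mbf{x}_t,\mbf{z}_t) \le V_\alpha(l_{t-1}(\mbf{x}_t,\mbf{Z})) \le V_\alpha(u_{t-1}(\mbf{x}_t,\mbf{Z})) \le u_{t-1}(\mbf{x}_t,\mbf{z}_t),
\]
equivalently the inclusion $I_{t-1}[V_\alpha(f(\mbf{x}_t,\mbf{Z}))] \subset [l_{t-1}(\mbf{x}_t,\mbf{z}_t), u_{t-1}(\mbf{x}_t,\mbf{z}_t)]$. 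Comparing the outer two terms shows that the width of the {\var} interval is no larger than the width of the objective's confidence interval evaluated at the single query point $(\mbf{x}_t,\mbf{z}_t)$, i.e.\ $V_\alpha(u_{t-1}(\mbf{x}_t,\mbf{Z})) - V_\alpha(l_{t-1}(\mbf{x}_t,\mbf{Z})) \le u_{t-1}(\mbf{x}_t,\mbf{z}_t) - l_{t-1}(\mbf{x}_t,\mbf{z}_t)$. Substituting the closed forms in \eqref{eq:fbound} collapses this right-hand side exactly to $2\beta_t^{1/2}\sigma_{t-1}(\mbf{x}_t,\mbf{z}_t)$, since the mean terms cancel. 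Combining the three inequalities yields the claim.

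The only bookkeeping to be careful about is the probabilistic conditioning: every inequality in the chain is deterministic \emph{given} the event of Lemma~\ref{lemma:ucb51} (that $f(\mbf{x},\mbf{z}) \in I_{t-1}[f(\mbf{x},\mbf{z})]$ uniformly in $\mbf{x},\mbf{z},t$), and the existence of a valid LV at every step is guaranteed by Theorem~\ref{theorem:lv}, so no extra union bound or failure probability is incurred and the final statement inherits probability $\ge 1-\delta$. I do not expect a genuine obstacle here — the LV definition was engineered precisely so that the interval-width comparison is immediate; the substantive work (existence of LVs, and the regret reduction \eqref{eq:iregretbound1}) is already discharged upstream, and this lemma is essentially the assembly step that packages those pieces into the per-round bound needed for the subsequent cumulative-regret analysis.
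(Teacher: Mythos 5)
Your proof is correct and follows exactly the same route as the paper's own argument: start from the regret bound \eqref{eq:iregretbound1} given by the optimistic choice of $\mbf{x}_t$, use the lacing-value property of Definition~\ref{definition:lv} to bound the width of $I_{t-1}[V_\alpha(f(\mbf{x}_t,\mbf{Z}))]$ by $u_{t-1}(\mbf{x}_t,\mbf{z}_t) - l_{t-1}(\mbf{x}_t,\mbf{z}_t)$, and then substitute \eqref{eq:fbound} so the posterior means cancel, all conditioned on the single high-probability event of Lemma~\ref{lemma:ucb51}. Your handling of the probabilistic bookkeeping (no extra union bound needed) also matches the paper.
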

Lemma~\ref{lemma:iregretbound2}, together with Lemma 5.4 from \citet{srinivas10ucb}, implies that the cumulative regret of our algorithm is bounded (Appendix~\ref{app:rtbound}): $R_T \le \sqrt{C_1 T \beta_T \gamma_T}$ 
where $C_1 \triangleq 8/\log(1 + \sigma_n^{-2})$, and $\gamma_T$ is the maximum information gain about $f$ that can be obtained from any set of $T$ observations.
\citet{srinivas10ucb} have analyzed $\gamma_T$ for several commonly used kernels such as SE and Mat\'ern kernels, and have shown that for these kernels, the upper bound on $R_T$ grows sub-linearly.
This implies that our algorithm is \emph{no-regret} because $\lim_{T\rightarrow \infty} R_T / T = 0$.


\begin{algorithm}[tb]
   \caption{The {\vy} Algorithm}
\begin{algorithmic}[1]
   \STATE {\bfseries Input:} $\mcl{D}_{\mbf{x}}$, $\mcl{D}_{\mbf{z}}$, prior $\mu_0=0, \sigma_0, k$
   \FOR{$i=1,2,\dots$}
   \STATE Select $\mbf{x}_t = \argmax_{\mbf{x} \in \mcl{D}_{\mbf{x}}} V_{\alpha}(u_{t-1}(\mbf{x},\mbf{Z}))$
   \STATE Select $\mbf{z}_t$ as a \emph{lacing value} w.r.t. $\mbf{x}_t$ (Definition~\ref{definition:lv})
   \STATE Obtain observation $y_t \triangleq f(\mbf{x}_t,\mbf{z}_t) + \epsilon_t$
   \STATE Update the GP posterior belief to obtain $\mu_t$ and $\sigma_t$
   \ENDFOR
\end{algorithmic}
\label{alg:v-ucb}
\end{algorithm}

Inspired by \citet{bogunovic2018adversarially}, at the {$T$-th} iteration of {\vy}, we can recommend $\mbf{x}_{t_*(T)}$ as an estimate of the maximizer $\mbf{x}_*$ of $V_\alpha(f(\mbf{x},\mbf{Z}))$, where $t_*(T) \triangleq  \argmax_{t \in \{1,\dots,T\}} V_{\alpha}(l_{t-1}(\mbf{x}_t,\mbf{Z}))$. Then, the instantaneous regret $r(\mbf{x}_{t_*(T)})$ is upper-bounded by $\sqrt{C_1 \beta_T \gamma_T / T}$ with high probability as we show in Appendix~\ref{app:recommendxbound}.
In our experiments in Section~\ref{sec:experiments}, 
we recommend $\argmax_{\mbf{x} \in \mcl{D}_T} V_{\alpha}(\mu_{t-1}(\mbf{x}, \mbf{Z}))$ (where $\mu_{t-1}(\mbf{x}, \mbf{Z})$ is a random function defined in the same manner as $f(\mbf{x},\mbf{Z})$) as an estimate of $\mbf{x}_*$ due to its empirical convergence.

\textbf{Computational Complexity.}
To compare our computational complexity
with that of the {\kgfull} algorithm from \citet{borisk20}, we exclude the common part of training the GP model (line 6) and assume that $\dz$ is finite. 
Then, the time complexity of {\vy} is dominated by that of the selection of $\mbf{x}_t$ (line 3) which includes the time complexity $\mcl{O}(|\dz| |\mcl{D}_{t-1}|^2)$ for the GP prediction at $\{\mbf{x}\} \times \dz$, and $\mcl{O}(|\dz|\log|\dz|)$ for the sorting of $u_{t-1}(\mbf{x},\dz)$ and searching of {\var}.
Hence, our overall complexity is $\mcl{O}(n |\dz|\ (|\mcl{D}_{t-1}|^2 + \log |\dz|))$, where $n$ is the number of iterations to maximize $V_\alpha(u_{t-1}(\mbf{x},\mbf{Z}))$ (line 3). 
Therefore, our V-UCB is more computationally efficient than {\kgfull} and its variant with approximation {\kg}, whose complexities are $\mcl{O}(n_{\text{out}} n_{\text{in}} K |\dz|\ (|\mcl{D}_{t-1}|^2 + |\dz| |\mcl{D}_{t-1}| + |\dz|^2 + M |\dz|))$ of {\kgfull} and $\mcl{O}(n_{\text{out}} |\mcl{D}_{t-1}| K |\dz|\ (|\mcl{D}_{t-1}|^2 + |\dz| |\mcl{D}_{t-1}| + |\dz|^2 + M |\dz|))$, respectively.\footnote{$n_{\text{out}}$ and $n_{\text{in}}$ are the numbers of iterations for the outer and inner optimization respectively, $K$ is the number of fantasy GP models required for their one-step lookahead, and $M$ is the number of functions sampled from the GP posterior \cite{borisk20}.}

\subsection{On the Selection of $\mbf{z}_t$}
\label{subsec:improvedselectz}

Although Algorithm~\ref{alg:v-ucb} is guaranteed to be no-regret with any choice of LV as $\mbf{z}_t$, we would like to select the LV that can reduce a large amount of the uncertainty of $V_{\alpha}(f(\mbf{x}_t,\mbf{Z}))$. 
However, relying on the information gain measure or the knowledge gradient method often incurs the expensive one-step lookahead.
Therefore, we use a simple heuristic by choosing the LV $\zlv$ with the maximum probability mass (or probability density if $\mbf{Z}$ is continuous) of $\zlv$.
We motivate this heuristic using an example with $\alpha = 0.2$, i.e., $V_{\alpha=0.2}(f(\mbf{x}_t,\mbf{Z})) = \inf \{\omega: P(f(\mbf{x}_t,\mbf{Z}) \le \omega) \ge 0.2\}$.
Suppose $\dz$ is finite and there are $2$ LV's $\mbf{z}_0$ and $\mbf{z}_1$ with $P(\mbf{z}_0) \ge 0.2$ and $P(\mbf{z}_1) = 0.01$. 
Then, because $P(f(\mbf{x}_t,\mbf{Z}) \le f(\mbf{x}_t,\mbf{z}_0)) \ge P(\mbf{z}_0) \ge 0.2$, it follows that $V_{\alpha=0.2}(f(\mbf{x}_t,\mbf{Z})) \le f(\mbf{x}_t,\mbf{z}_0)$, i.e., querying $\mbf{z}_0$ at $\mbf{x}_t$ gives us information about an explicit upper bound on $V_{\alpha=0.2}(f(\mbf{x}_t,\mbf{Z}))$ to reduce its uncertainty.
In contrast, this cannot be achieved by querying $\mbf{z}_1$ due to its low probability mass.
This simple heuristic can also be implemented when $\mbf{Z}$ is a continuous random variable which we will introduce in Section~\ref{sec:continuousz}.
%
\begin{remark}
\label{rmk:unknownZ}
Although we assume that we can select both $\mbf{x}_t$ and $\mbf{z}_t$ during our
algorithm, Corollary~\ref{corollary:loclv} also gives us some insights about the scenario where we cannot select $\mbf{z}_t$.
In this case, in each iteration $t$, we select $\mbf{x}_t$ while $\mbf{z}_t$ is randomly sampled by the environment following the distribution of $\mbf{Z}$. 
Next, we observe both $\mbf{z}_t$ and $y_{(\mbf{x}_t,\mbf{z}_t)}$ and then update the GP posterior belief of $f$.
Of note, Corollary~\ref{corollary:loclv} has shown that all LV lie in the set $\mcl{Z}_l^{\le} \cap \mcl{Z}_u^{\ge}$. 
However, the probability of this set is usually small, because $P(\mbf{Z} \in \mcl{Z}_l^{\le} \cap \mcl{Z}_u^{\ge}) \le P(\mbf{Z} \in \mcl{Z}_l^{\le}) \le \alpha$ and small values of $\alpha$ are often used by real-world applications to manage risks.
Thus, the probability that the sampled $\mbf{z}_t$ is an LV is small. 
As a result, we suggest sampling a large number of $\mbf{z}_t$'s from the environment to increase the chance that an LV is sampled.
On the other hand, the small probability of sampling an LV 
motivates the need for us to select $\mbf{z}_t$.
\end{remark}

\subsection{{\vy} Approaches {\stableopt} as $\alpha \rightarrow 0^+$}
\label{subsec:vy-stableopt}

Recall that the objective of adversarially robust BO is to find $\mbf{x} \in \dx$ that maximizes
$\min_{\mbf{z} \in \mcl{D}_{\mbf{z}}} f(\mbf{x}, \mbf{z})$ \cite{bogunovic2018adversarially} by iteratively specifying input query $(\mbf{x}_t, \mbf{z}_t)$ to collect noisy observations $y_{\mbf{x}_t,\mbf{z}_t}$. 
It is different from BO of {\var} since its $\mbf{z}$ is not random but selected by an adversary who aims to minimize the function evaluation. 
The work of \citet{bogunovic2018adversarially} has proposed a no-regret algorithm for this setting named {\stableopt}, which selects
%
\begin{equation}
\begin{array}{r@{}l}
\mbf{x}_t &= \argmax_{\mbf{x} \in \mcl{D}_x} \min_{\mbf{z} \in \mcl{D}_z} u_{t-1}(\mbf{x},\mbf{z})\ ,\\
\mbf{z}_t &= \argmin_{\mbf{z} \in \mcl{D}_z} l_{t-1}(\mbf{x}_t,\mbf{z})
\end{array}
\label{eq:stableopt}
\end{equation}
where $u_{t-1}$ and $l_{t-1}$ are defined in \eqref{eq:fbound}.

At first glance, BO of {\var} and adversarially robust BO are seemingly different problems because $\mbf{Z}$ is a random variable in the former but not in the latter. 
However, based on our key observation on the connection between the minimum value of a continuous function $h(\mbf{w})$ and the {\var} of the random variable $h(\mbf{W})$ in the following theorem, these two problems and their solutions are connected as shown in Corollary~\ref{corollary:alpha0lv}, and~\ref{corollary:stableopt}.

\begin{theorem}
\label{theorem:0plus}
Let $\mbf{W}$ be a random variable with the support $\mcl{D}_w \subset \mbb{R}^{d_w}$ and dimension $d_w$.
Let $h$ be a continuous function mapping from $\mbf{w} \in \mcl{D}_w$ to $\mbb{R}$. Then, $h(\mbf{W})$ denotes the random variable whose realization is the function $h$ evaluation at a realization $\mbf{w}$ of $\mbf{W}$. Suppose $h(\mbf{w})$ has a minimizer $\mbf{w}_{\min} \in \mcl{D}_w$, then $\lim_{\alpha \rightarrow 0^+} V_{\alpha}(h(\mbf{W})) = h(\mbf{w}_{\min})\ .$
\end{theorem}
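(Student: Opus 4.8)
The plan is to write $m \triangleq h(\mbf{w}_{\min}) = \min_{\mbf{w} \in \mcl{D}_w} h(\mbf{w})$ and to establish $\lim_{\alpha \rightarrow 0^+} V_\alpha(h(\mbf{W})) = m$ by a squeezing argument. First I would observe that, by the definition in \eqref{eq:var}, $V_\alpha(h(\mbf{W})) = \inf\{\omega : P(h(\mbf{W}) \le \omega) \ge \alpha\}$ is non-decreasing in $\alpha$: enlarging $\alpha$ shrinks the feasible set $\{\omega : P(h(\mbf{W}) \le \omega) \ge \alpha\}$ and hence can only raise its infimum. Consequently the one-sided limit as $\alpha \rightarrow 0^+$ exists and equals $\inf_{\alpha \in (0,1)} V_\alpha(h(\mbf{W}))$, so it suffices to bound this infimum from below and from above by $m$.

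For the lower bound, since $\mbf{W} \in \mcl{D}_w$ almost surely and $h(\mbf{w}) \ge m$ for every $\mbf{w} \in \mcl{D}_w$, we have $h(\mbf{W}) \ge m$ almost surely; hence $P(h(\mbf{W}) \le \omega) = 0$ for every $\omega < m$. Because each $\alpha > 0$ is strictly positive, no such $\omega$ lies in the feasible set, so its infimum is at least $m$. This gives $V_\alpha(h(\mbf{W})) \ge m$ for all $\alpha \in (0,1)$ and therefore $\inf_\alpha V_\alpha(h(\mbf{W})) \ge m$.

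For the upper bound I would fix $\epsilon > 0$ and exhibit a positive $\alpha$ with $V_\alpha(h(\mbf{W})) \le m + \epsilon$. Continuity of $h$ at $\mbf{w}_{\min}$ yields an open ball $B$ around $\mbf{w}_{\min}$ with $h(\mbf{w}) < m + \epsilon$ for all $\mbf{w} \in B \cap \mcl{D}_w$, so $P(h(\mbf{W}) \le m + \epsilon) \ge P(\mbf{W} \in B)$. The crux is to show $p \triangleq P(\mbf{W} \in B) > 0$, and this is exactly where the definition of $\mcl{D}_w$ as the \emph{smallest} closed set of full probability is used. If instead $P(\mbf{W} \in B) = 0$, then $\mcl{D}_w \setminus B = \mcl{D}_w \cap B^c$ would be closed (an intersection of two closed sets) and of full probability, yet strictly smaller than $\mcl{D}_w$ since it omits $\mbf{w}_{\min} \in B \cap \mcl{D}_w$ — contradicting minimality. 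Hence $p > 0$, and choosing any $\alpha \in (0, p]$ places $m + \epsilon$ in the feasible set, giving $V_\alpha(h(\mbf{W})) \le m + \epsilon$. Letting $\epsilon \rightarrow 0^+$ yields $\inf_\alpha V_\alpha(h(\mbf{W})) \le m$, which together with the lower bound proves the claim.

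I expect the main obstacle to be the positivity step $P(\mbf{W} \in B) > 0$: it does not follow from continuity alone and must be extracted from the minimality in the support definition, relying on the fact that $\mbf{w}_{\min}$ genuinely belongs to $\mcl{D}_w$ (guaranteed by hypothesis) so that every open neighborhood of it carries positive probability. The remaining ingredients — monotonicity of $V_\alpha$ in $\alpha$ and the almost-sure bound $h(\mbf{W}) \ge m$ — are routine.
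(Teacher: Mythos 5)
Your proof is correct and takes essentially the same route as the paper's: both hinge on the monotonicity of $V_\alpha(h(\mbf{W}))$ in $\alpha$, and on exactly the same minimality-of-support contradiction (removing an open ball of probability zero would leave a strictly smaller closed set of full measure) to show that a neighborhood of $\mbf{w}_{\min}$ on which $h < h(\mbf{w}_{\min}) + \epsilon$ carries positive probability, with the almost-sure bound $h(\mbf{W}) \ge h(\mbf{w}_{\min})$ supplying the other direction. The only difference is organizational: the paper casts both bounds as contradictions about the limit $\omega_{0^+}$ (and proves the upper bound $\omega_{0^+} \le h(\mbf{w})$ for every $\mbf{w} \in \mcl{D}_w$ rather than just at $\mbf{w}_{\min}$), whereas you run a direct squeeze, which is marginally cleaner but not a different argument.
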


\begin{corollary}
\label{corollary:alpha0lv}
Adversarially robust BO which finds $\argmax_{\mbf{x}} \min_{\mbf{z}} f(\mbf{x},\mbf{z})$ can be cast as BO of {\var} by letting (a) $\alpha$ approach $0$ from the right and (b) $\dz$ be the support of the environmental random variable $\mbf{Z}$, i.e., $\argmax_{\mbf{x}} \lim_{\alpha \rightarrow 0^+} V_\alpha(f(\mbf{x},\mbf{Z}))$.
\end{corollary}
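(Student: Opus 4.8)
The plan is to sandwich $V_\alpha(h(\mbf{W}))$ between $h(\mbf{w}_{\min})$ and $h(\mbf{w}_{\min}) + \epsilon$ for every $\epsilon > 0$ and then send $\alpha \to 0^+$. Write $m \triangleq h(\mbf{w}_{\min}) = \min_{\mbf{w} \in \mcl{D}_w} h(\mbf{w})$. I would first establish the lower bound $V_\alpha(h(\mbf{W})) \ge m$ for every $\alpha \in (0,1)$: since $h(\mbf{w}) \ge m$ for all $\mbf{w} \in \mcl{D}_w$ and $P(\mbf{W} \in \mcl{D}_w) = 1$, any $\omega < m$ satisfies $P(h(\mbf{W}) \le \omega) = 0 < \alpha$, so no such $\omega$ lies in the set $\{\omega : P(h(\mbf{W}) \le \omega) \ge \alpha\}$ whose infimum defines $V_\alpha$ in \eqref{eq:var}. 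Taking $\liminf$ gives $\liminf_{\alpha \to 0^+} V_\alpha(h(\mbf{W})) \ge m$.

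For the matching upper bound, the crux is to show that the minimum value is approached with strictly positive probability, i.e., $p_\epsilon \triangleq P(h(\mbf{W}) \le m + \epsilon) > 0$ for each $\epsilon > 0$. I would combine two facts: (i) continuity of $h$ at $\mbf{w}_{\min}$ yields an open ball $U = \{\mbf{w}: \|\mbf{w} - \mbf{w}_{\min}\| < \delta\}$ on which $h(\mbf{w}) < m + \epsilon$ for all $\mbf{w} \in U \cap \mcl{D}_w$; and (ii) the defining property of the support $\mcl{D}_w$ as the smallest closed set of full measure, which forces every open neighborhood of the support point $\mbf{w}_{\min}$ to carry positive probability, $P(\mbf{W} \in U) > 0$. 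Then $p_\epsilon \ge P(\mbf{W} \in U \cap \mcl{D}_w) = P(\mbf{W} \in U) > 0$.

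Given $p_\epsilon > 0$, for every $\alpha \in (0, p_\epsilon]$ the value $m + \epsilon$ satisfies $P(h(\mbf{W}) \le m + \epsilon) = p_\epsilon \ge \alpha$, so $m + \epsilon$ belongs to the infimum set and hence $V_\alpha(h(\mbf{W})) \le m + \epsilon$. This gives $\limsup_{\alpha \to 0^+} V_\alpha(h(\mbf{W})) \le m + \epsilon$, and since $\epsilon > 0$ is arbitrary, $\limsup_{\alpha \to 0^+} V_\alpha(h(\mbf{W})) \le m$. Combining with the lower bound shows the limit exists and equals $m = h(\mbf{w}_{\min})$.

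The main obstacle I anticipate is step (ii): rigorously extracting $P(\mbf{W} \in U) > 0$ from the ``smallest closed set of full measure'' definition of the support. I would argue by contradiction --- if some open neighborhood $U$ of $\mbf{w}_{\min}$ had $P(\mbf{W} \in U) = 0$, then $\mcl{D}_w \setminus U$ would be a strictly smaller closed set still carrying probability $1$, contradicting the minimality of $\mcl{D}_w$ together with $\mbf{w}_{\min} \in \mcl{D}_w$. Care is also needed to apply continuity to $h$ as a function restricted to $\mcl{D}_w$, so the ball $U$ need only control $h$ on $U \cap \mcl{D}_w$, which is exactly where the probability mass lives.
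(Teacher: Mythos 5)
Your proposal is correct and follows essentially the same route as the paper, which proves the corollary via Theorem~\ref{theorem:0plus}: both arguments hinge on the identical key fact that every open neighborhood of a support point carries positive probability (obtained by the same contradiction with minimality of the closed support $\mcl{D}_w$), combined with continuity of $h$ to give $P(h(\mbf{W}) \le h(\mbf{w}_{\min}) + \epsilon) > 0$, and then the infimum definition of {\var}. The only difference is bookkeeping: the paper first shows $V_\alpha(h(\mbf{W}))$ is nondecreasing in $\alpha$ so the limit $\omega_{0^+}$ exists and then rules out $\omega_{0^+} \neq h(\mbf{w}_{\min})$ by two contradictions, whereas you sandwich the limit directly with a $\liminf$/$\limsup$ argument, needing continuity only at $\mbf{w}_{\min}$ --- an equivalent but slightly leaner organization.
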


Interestingly, from Theorem~\ref{theorem:0plus}, we observe that $\mcl{Z}_l^\le$ in Corollary~\ref{corollary:loclv} 
approaches 
the set of minimizers  $\argmin_{\mbf{z} \in \dz} l_{t-1}(\mbf{x}_t,\mbf{z})$ as $\alpha \rightarrow 0^+$. 
Corollary \ref{corollary:stableopt} below shows that LV w.r.t $\mbf{x}_t$ becomes a minimizer of $l_{t-1}(\mbf{x}_t,\mbf{z})$ which is same as the selected $\mbf{z}_t$ of {\stableopt} in \eqref{eq:stableopt}.

\begin{corollary}
\label{corollary:stableopt}
The {\stableopt} solution to adversarially robust BO selects the same input query as that selected by the {\vy} solution to the corresponding BO of {\var} in Corollary~\ref{corollary:alpha0lv}.
\end{corollary}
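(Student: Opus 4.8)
The plan is to verify separately that the two components of the input query, $\mbf{x}_t$ and $\mbf{z}_t$, agree between {\stableopt} in \eqref{eq:stableopt} and {\vy} in the limiting regime of Corollary~\ref{corollary:alpha0lv} (that is, $\alpha \rightarrow 0^+$ with $\dz$ taken to be the support of $\mbf{Z}$). The whole argument rests on Theorem~\ref{theorem:0plus}: applying it with $h = u_{t-1}(\mbf{x},\cdot)$ and $\mbf{W} = \mbf{Z}$ (legitimate because the GP posterior upper bound $u_{t-1}(\mbf{x},\cdot)$ is continuous in $\mbf{z}$ under the SE kernel and attains its minimum on the support) yields $\lim_{\alpha \rightarrow 0^+} V_\alpha(u_{t-1}(\mbf{x},\mbf{Z})) = \min_{\mbf{z} \in \dz} u_{t-1}(\mbf{x},\mbf{z})$, and likewise with $h = l_{t-1}(\mbf{x},\cdot)$.

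For the optimization variable, I would substitute this limit into the {\vy} rule $\mbf{x}_t = \argmax_{\mbf{x} \in \dx} V_\alpha(u_{t-1}(\mbf{x},\mbf{Z}))$ to obtain $\argmax_{\mbf{x}} \min_{\mbf{z}} u_{t-1}(\mbf{x},\mbf{z})$, which is exactly the {\stableopt} rule for $\mbf{x}_t$ in \eqref{eq:stableopt}. Strictly, this requires that the $\argmax$ of the pointwise limit coincides with the limit of the $\argmax$, which is immediate once the $\alpha$-convergence is uniform in $\mbf{x}$ on the compact domain $\dx$.

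For the environmental variable, the key is to show that, in the limit, the set of LV (Definition~\ref{definition:lv}) coincides with the minimizer set $\argmin_{\mbf{z}} l_{t-1}(\mbf{x}_t,\mbf{z})$ from which {\stableopt} draws $\mbf{z}_t$. The inclusion ``LV $\subseteq$ minimizers'' follows from Corollary~\ref{corollary:loclv}: every LV lies in $\mcl{Z}_l^{\le} = \{\mbf{z}: l_{t-1}(\mbf{x}_t,\mbf{z}) \le V_\alpha(l_{t-1}(\mbf{x}_t,\mbf{Z}))\}$, and since $V_\alpha(l_{t-1}(\mbf{x}_t,\mbf{Z})) \rightarrow \min_{\mbf{z}} l_{t-1}(\mbf{x}_t,\mbf{z})$, the defining inequality forces $l_{t-1}(\mbf{x}_t,\mbf{z}) = \min_{\mbf{z}'} l_{t-1}(\mbf{x}_t,\mbf{z}')$. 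For the reverse inclusion I would take any minimizer $\mbf{z}_{\min}$ of $l_{t-1}(\mbf{x}_t,\cdot)$ and check the four inequalities of Definition~\ref{definition:lv}: the first, $l_{t-1}(\mbf{x}_t,\mbf{z}_{\min}) = \min l_{t-1} \le V_\alpha(l_{t-1})$, holds because {\var} at any $\alpha > 0$ never falls below the essential infimum; the middle, $V_\alpha(l_{t-1}) \le V_\alpha(u_{t-1})$, holds by monotonicity of {\var} together with $l_{t-1} \le u_{t-1}$ pointwise; and the last, $V_\alpha(u_{t-1}(\mbf{x}_t,\mbf{Z})) \le u_{t-1}(\mbf{x}_t,\mbf{z}_{\min})$, holds in the limit since its left side tends to $\min_{\mbf{z}} u_{t-1}(\mbf{x}_t,\mbf{z}) \le u_{t-1}(\mbf{x}_t,\mbf{z}_{\min})$. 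Hence $\mbf{z}_{\min}$ is an LV; the two sets coincide (and are nonempty by Theorem~\ref{theorem:lv}), so since {\vy} is free to pick any LV, it can pick the same $\mbf{z}_t$ as {\stableopt}.

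The main obstacle I anticipate is making the two ``limit of an $\argmax$'' and ``limit of a set'' steps rigorous rather than heuristic: the $\mbf{x}_t$ claim needs the $\alpha \rightarrow 0^+$ convergence of $V_\alpha(u_{t-1}(\mbf{x},\mbf{Z}))$ to be uniform over $\mbf{x} \in \dx$ so that maximizers converge, and the $\mbf{z}_t$ claim needs the final LV inequality to hold exactly in the limit rather than only asymptotically. Both are controlled by the continuity granted by the SE kernel together with compactness of $\dx$ and $\dz$, but these are precisely the places that require care; the remaining inequalities are routine monotonicity facts about {\var}.
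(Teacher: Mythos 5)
Your proposal is correct and takes essentially the same route as the paper: apply Theorem~\ref{theorem:0plus} to $u_{t-1}(\mbf{x},\cdot)$ and $l_{t-1}(\mbf{x},\cdot)$ to match the $\mbf{x}_t$-rules, and identify the set of LV with $\argmin_{\mbf{z} \in \dz} l_{t-1}(\mbf{x}_t,\mbf{z})$ --- your two inclusions are exactly the paper's computation that, in the limit, $\mcl{Z}_u^{\ge} = \dz$ while $\mcl{Z}_l^{\le}$ collapses to the minimizer set of $l_{t-1}(\mbf{x}_t,\cdot)$. The uniform-convergence caveat you flag is not actually needed: in the corresponding problem of Corollary~\ref{corollary:alpha0lv} the limit $\alpha \rightarrow 0^+$ is built into the objective itself, so by Theorem~\ref{theorem:0plus} the {\vy} acquisition function for $\mbf{x}_t$ is \emph{pointwise equal} to $\min_{\mbf{z} \in \dz} u_{t-1}(\mbf{x},\mbf{z})$ (not merely convergent to it), and identical functions have identical maximizer sets.
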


The proof of Theorem~\ref{theorem:0plus} and its two corollaries are shown in Appendix~\ref{app:0plus}.
We note that {\vy} is also applicable to the optimization of $V_{\alpha}(f(\mbf{x},\mbf{Z}))$ where the distribution of $\mbf{Z}$ is a conditional distribution given $\mbf{x}$. For example,
in robotics, if there exists noise/perturbation in the control, an optimization problem of interest is
$V_{\alpha}(f(\mbf{x} + \bm{\xi}(\mbf{x})))$ where $\bm{\xi}(\mbf{x})$ is the random perturbation that depends on $\mbf{x}$.



\subsection{Implementation of V-UCB with Continuous Random Variable $\mbf{Z}$}
\label{sec:continuousz}

The V-UCB algorithm involves two steps: selecting $\mbf{x}_t = \argmax_{\mbf{x} \in \dx} V_{\alpha}(u_{t-1}(\mbf{x},\mbf{Z}))$ and selecting $\mbf{z}_t$ as the LV $\zlv$ with the largest probability mass (or probability density). When $|\dz|$ is finite, given $\mbf{x}$,  $V_{\alpha}(u_{t-1}(\mbf{x},\mbf{Z}))$ can be computed exactly. 
The gradient of $V_{\alpha}(u_{t-1}(\mbf{x},\mbf{Z}))$ with respect to $\mbf{x}$ can be obtained 
easily (e.g., using
automatic differentiation provided in the Tensorflow library \cite{tensorflow2015-whitepaper}) 
to aid the selection of $\mbf{x}_t$.
In this case, the latter step can also be performed by constructing the set of all LV (checking the condition in the Definition~\ref{definition:lv} for all $\mbf{z} \in \dz$) and choosing the LV $\zlv$ with the largest probability mass.

{\bf{Estimation of {\var}.}} When $\mbf{Z}$ is a continuous random variable, estimating {\var} by an ordered set of samples (e.g., in \citet{borisk20}) may require a prohibitively large number of samples, especially for small values of $\alpha$. Thus, we employ the following popular pinball (or tilted absolute value) function in quantile regression \cite{koenker1978regression} to estimate {\var} as a lower $\alpha$-quantile:
\begin{equation*}
\rho_{\alpha}(w) \triangleq \begin{cases}
    \alpha w &\text{if } w \ge 0\ ,\\
    (\alpha - 1) w &\text{if } w < 0
\end{cases}
\end{equation*}
where $w \in \mbb{R}$. In particular, to estimate $V_{\alpha}(u_{t-1}(\mbf{x},\mbf{Z}))$ as $\nu \in \mbb{R}$, we find $\nu$ that minimizes:
\begin{equation}
    \mbb{E}_{\mbf{z} \sim p(\mbf{Z})}[\rho_{\alpha}(u_{t-1}(\mbf{x},\mbf{z}) - \nu)]\ .
    \label{eq:lossfindvar}
\end{equation}
A well-known example is when $\alpha = 0.5$ and $\rho_{\alpha}$ is the absolute value function, then the optimal $\nu$ is the median. 
The loss in \eqref{eq:lossfindvar} can be optimized using stochastic gradient descent with a random batch of samples of $\mbf{Z}$ at each optimization iteration. 

{\bf{Maximization of $V_{\alpha}(u_{t-1}(\mbf{x},\mbf{Z}))$.}} Unfortunately, to maximize $V_{\alpha}(u_{t-1}(\mbf{x},\mbf{Z}))$ over $\mbf{x} \in \dx$, there is no gradient of $V_{\alpha}(u_{t-1}(\mbf{x},\mbf{Z}))$ with respect to $\mbf{x}$ under the above approach. This situation resembles BO where there is no gradient information, but only noisy observations at input queries. Unlike BO, the observation (samples of $u_{t-1}(\mbf{x},\mbf{Z})$ at $\mbf{x}$) is not costly. Therefore, we propose the \emph{local neural surrogate optimization} (LNSO) algorithm to find $\argmax_{\mbf{x} \in \dx} V_{\alpha}(u_{t-1}(\mbf{x},\mbf{Z}))$ which is visualized in Fig.~\ref{fig:lnso}. Suppose the optimization is initialized at $\mbf{x} = \mbf{x}^{(0)}$, instead of maximizing $V_{\alpha}(u_{t-1}(\mbf{x},\mbf{Z}))$ (whose gradient w.r.t. $\mbf{x}$ is unknown), we maximize a surrogate function $g(\mbf{x},\bm{\theta}^{(0)})$ (modeled by a neural network) that approximates $V_{\alpha}(u_{t-1}(\mbf{x},\mbf{Z}))$ well in a local region of $\mbf{x}^{(0)}$, e.g., a ball $\mcl{B}(\mbf{x}^{(0)}, r)$ of radius $r$ in Fig.~\ref{fig:lnso}. 
We obtain such parameters $\bm{\theta}^{(0)}$ by minimizing the following loss function:
\begin{equation}
\begin{array}{l}
\mcl{L}_g(\bm{\theta},\mbf{x}^{(0)})\\ 
\ \displaystyle\triangleq \mbb{E}_{\mbf{x} \in \mcl{B}(\mbf{x}^{(0)}, r)} \mbb{E}_{\mbf{Z} \sim p(\mbf{Z})} \left[ \rho_\alpha(u_{t-1}(\mbf{x},\mbf{z}) - g(\mbf{x};\bm{\theta}))\right]
\end{array}
\label{eq:losslnso}
\end{equation}
where the expectation $\mbb{E}_{\mbf{x} \in \mcl{B}(\mbf{x}^{(0)}, r)}$ is taken over a uniformly distributed $\mbf{X}$ in $\mcl{B}(\mbf{x}^{(0)}, r)$. Minimizing \eqref{eq:losslnso} can be performed with stochastic gradient descent.
If maximizing $g(\mbf{x},\bm{\theta}^{(0)})$ leads to a value $\mbf{x}^{(i)} \notin \mcl{B}(\mbf{x}^{(0)}, r)$ (Fig.~\ref{fig:lnso}), we update the local region to be centered at $\mbf{x}^{(i)}$ ($\mcl{B}(\mbf{x}^{(i)}, r)$) and find  $\bm{\theta}^{(i)} = \argmin_{\bm{\theta}} \mcl{L}_g(\bm{\theta}, \mbf{x}^{(i)})$ such that $g(\mbf{x},\bm{\theta}^{(i)})$ approximates $V_{\alpha}(u_{t-1}(\mbf{x},\mbf{Z}))$ well $\forall \mbf{x} \in \mcl{B}(\mbf{x}^{(i)},r)$. Then, $\mbf{x}^{(i)}$ is updated by maximizing $g(\mbf{x},\bm{\theta}^{(i)})$ for $\mbf{x} \in \mcl{B}(\mbf{x}^{(i)},r)$. The complete algorithm is described in Appendix~\ref{app:lnso}.

We prefer a small value of $r$ so that the ball $\mcl{B}$ is small. In such case, $V_\alpha(u_{t-1}(\mbf{x},\mbf{Z}))$ for $\mbf{x} \in \mcl{B}$ can be estimated well with a small neural network $g(\mbf{x},\bm{\theta})$ whose training requires a small number of iterations. 

\begin{figure}
    \centering
    \includegraphics[width=0.27\textwidth]{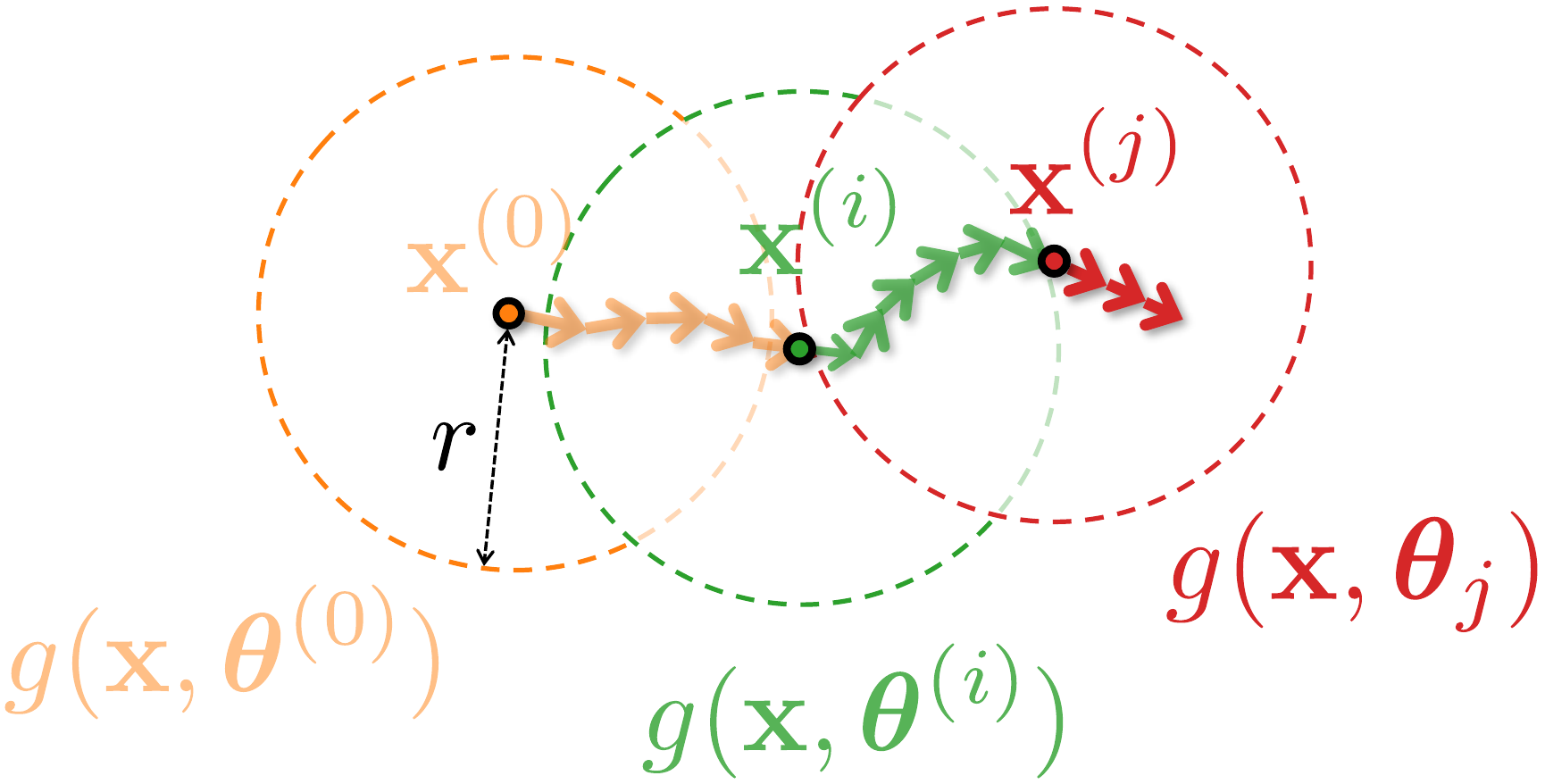}
    \caption{Plot of a hypothetical optimization path (as arrows) of LNSO initialized at $\mbf{x}^{(0)}$. Input $\mbf{x}$ is $2$-dimensional. The boundary of a ball $\mcl{B}$ of radius $r$ is plotted as a dotted circle. When the updated $\mbf{x}$ moves out of $\mcl{B}$, the center of $\mcl{B}$ and $\bm{\theta}$ are updated.}
    \label{fig:lnso}
\end{figure}

{\bf{Search of Lacing Values.}} 
Given a continuous random variable $\mbf{Z}$, to find an LV w.r.t $\mbf{x}_t$ in line 4 of Algorithm~\ref{alg:v-ucb}, i.e., to find a $\mbf{z}$ satisfying $d_u(\mbf{z}) \triangleq u_{t-1}(\mbf{x}_t, \mbf{z}) - V_\alpha(u_{t-1}(\mbf{x}_t,\mbf{Z})) \ge 0$ and $d_l(\mbf{z}) \triangleq V_\alpha(l_t(\mbf{x}_t,\mbf{Z})) - l_{t-1}(\mbf{x}_t, \mbf{z}) \ge 0$, we choose a $\mbf{z}$ that minimizes
%
\begin{equation}
\mcl{L}_{\text{LV}}(\mbf{z}) \triangleq \text{ReLU}(-d_u(\mbf{z})) + \text{ReLU}(-d_l(\mbf{z}))
\label{eq:llv}
\end{equation}
where $\text{ReLU}(\omega) = \max(\omega,0)$ is the rectified linear unit function ($\omega \in \mbb{R}$). To include the heuristic in Section~\ref{subsec:improvedselectz} which selects the LV with the highest probability density, we find $\mbf{z}$ that minimizes
\begin{equation*}
\mcl{L}_{\text{LV-P}}(\mbf{z}) \triangleq \mcl{L}_{\text{LV}}(\mbf{z}) - \mbb{I}_{d_u(\mbf{z}) \ge 0} \mbb{I}_{ d_l(\mbf{z}) \ge 0}\ p(\mbf{z})
\end{equation*}
where $\mcl{L}_{\text{LV}}(\mbf{z})$ is defined in \eqref{eq:llv}; $p(\mbf{z})$ is the probability density; $\mbb{I}_{d_u(\mbf{z}) \ge 0}$ and $\mbb{I}_{ d_l(\mbf{z}) \ge 0}$ are indicator functions.

\section{Experiments}
\label{sec:experiments}

In this section, we empirically evaluate the performance of {\vy}. The work of \citet{borisk20} has motivated the use of the approximated variant of their algorithm {\kg} over its original version {\kgfull} by showing that {\kg} achieves comparable empirical performances to {\kgfull} while incurring much less computational cost. Furthermore, {\kg} has been shown to significantly outperform 
other competing algorithms \cite{borisk20}.
Therefore, we choose {\kg} as the main baseline to empirically compare with {\vy}. The experiments using {\kg} is performed by 
adding
new objective functions to the existing implementation of \citet{borisk20} at \texttt{https://github.com/saitcakmak/BoRisk}.

Regarding {\vy}, when $\dz$ is finite and the distribution of $\mbf{Z}$ is not uniform, we perform {\vy} by selecting $\mbf{z}_t$ as an LV at random, labeled as \emph{{\vy} Unif}, and by selecting $\mbf{z}_t$ as the LV with the maximum probability mass, labeled as \emph{{\vy} Prob}.

The performance metric 
is defined as $V_\alpha(f(\mbf{x}_*,\mbf{Z})) - V_\alpha(f(\tilde{\mbf{x}},\mbf{Z}))$ where $\tilde{\mbf{x}}$ is the recommended input. 
The evaluation of {\var} is described in Section~\ref{sec:continuousz}.
The recommended input is $\argmax_{\mbf{x} \in \mcl{D}_T} V_{\alpha}(\mu_{t-1}(\mbf{x}, \mbf{Z}))$ for {\vy}, and $\argmin_{\mbf{x} \in \dx} \mbb{E}_{t-1}[V_{\alpha}(f(\mbf{x}, \mbf{Z}))]$ for {\kg} \cite{borisk20}, where $\mbb{E}_{t-1}$ is the conditional expectation over the unknown $f$ given the observations $\mcl{D}_{t-1}$ (approximated by a finite set of functions sampled from the GP posterior belief).\footnote{While the work of \citet{borisk20} considers a minimization problem of {\var}, our work considers a maximization problem of {\var}. Therefore, the objective functions for {\kg} are the negation of those for {\vy}. For {\vy} at risk level $\alpha$, 
the risk level for {\kg} is $1-\alpha$.}
We repeat each experiment $10$ times with different random initial observations $\mbf{y}_{\mcl{D}_0}$ and plot both the mean (as lines) and the $70\%$ confidence interval (as shaded areas) of the $\log10$ of the performance metric.
The detailed descriptions of experiments are deferred to Appendix~\ref{app:experiment}.

\subsection{Synthetic Benchmark Functions}
\label{subsec:syn}

We use $3$ synthetic benchmark functions: Branin-Hoo, Goldstein-Price, and Hartmann-3D functions to construct $4$ optimization problems: Branin-Hoo-$(1,1)$, Goldstein-Price-$(1,1)$, Hartmann-$(1,2)$, and Hartmann-$(2,1)$.
The tuples represent $(d_x, d_z)$ corresponding to the 
dimensions of $\mbf{x}$ and $\mbf{z}$. 
The noise variance $\sigma_n^2$ is set to $0.01$. The risk level $\alpha$ is $0.1$. 
There are $2$ different settings: finite $\dz$ ($|\dz| = 64$ for Hartmann-$(1,2)$ and $|\dz|=100$ for the others) and continuous $\dz$. In the latter setting, $r=0.1$ and the surrogate function is a neural network of $2$ hidden layers with $30$ hidden neurons each, and sigmoid activation functions.

The results are shown in Fig.~\ref{fig:synfinite} and Fig.~\ref{fig:syncont} for the settings of discrete $\dz$ and continuous $\dz$, respectively. When $\dz$ is discrete (Fig.~\ref{fig:synfinite}), {\vy} Unif is on par with {\kg} in optimizing Branin-Hoo-$(1,1)$ and Goldstein-Price-$(1,1)$, and outperforms {\kg} in optimizing Hartmann-$(1,2)$ and Hartmann-$(2,1)$. {\vy} Prob is also able to exploit the probability distribution of $\mbf{Z}$ to outperform {\vy} Unif. When $\dz$ is continuous (Fig.~\ref{fig:syncont}), {\vy} Prob outperforms {\kg}.
The unsatisfactory performance of {\kg} in some experiments may be attributed to its approximation of the inner optimization problem in the acquisition function \cite{borisk20}, and the approximation of {\var} using samples of $\mbf{Z}$ and the GP posterior belief.
\newcommand\figheight{0.201}
\begin{figure}[ht]
\centering
\begin{tabular}{@{}cc@{}}
\includegraphics[height=\figheight\textwidth]{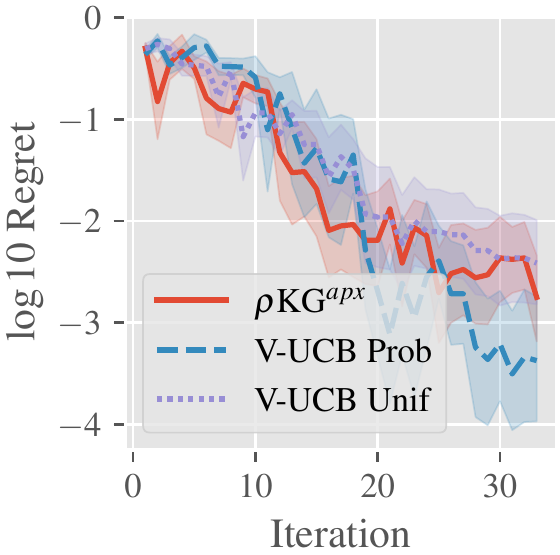}
&
\includegraphics[height=\figheight\textwidth]{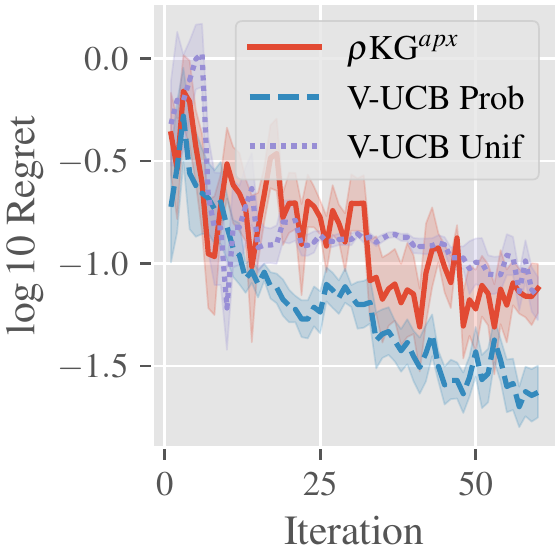}
\\
(a) Branin-Hoo-$(1,1)$
&
(b) Goldstein-Price-$(1,1)$
\\
\\
\includegraphics[height=\figheight\textwidth]{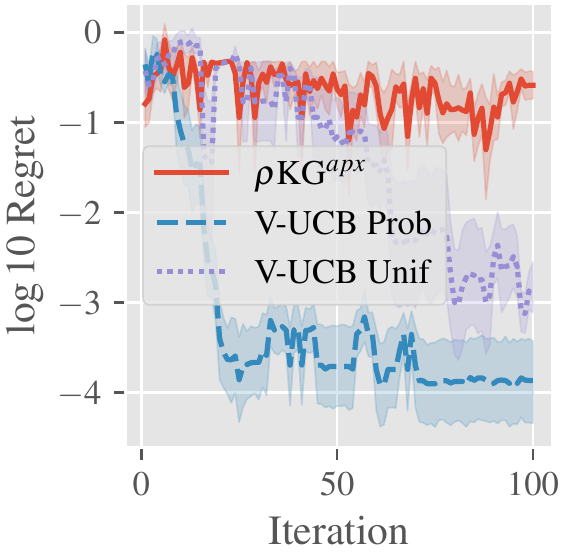}
&
\includegraphics[height=\figheight\textwidth]{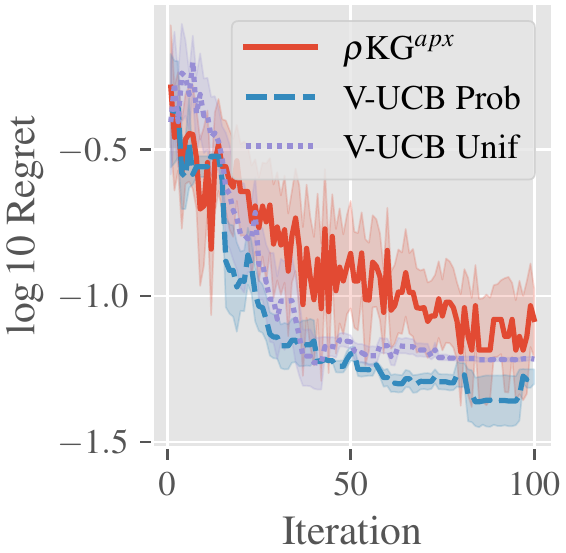}
\\
(c) Hartmann-$(1,2)$
&
(d) Hartmann-$(2,1)$
\end{tabular}
\caption{Synthetic benchmark functions with finite $\dz$.}
\label{fig:synfinite}
\end{figure}
\begin{figure}[ht]
\centering
\begin{tabular}{@{}cc@{}}
\includegraphics[height=\figheight\textwidth]{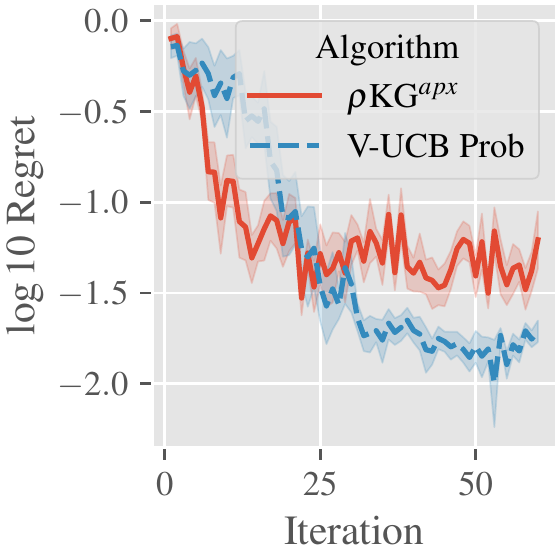}
&
\includegraphics[height=\figheight\textwidth]{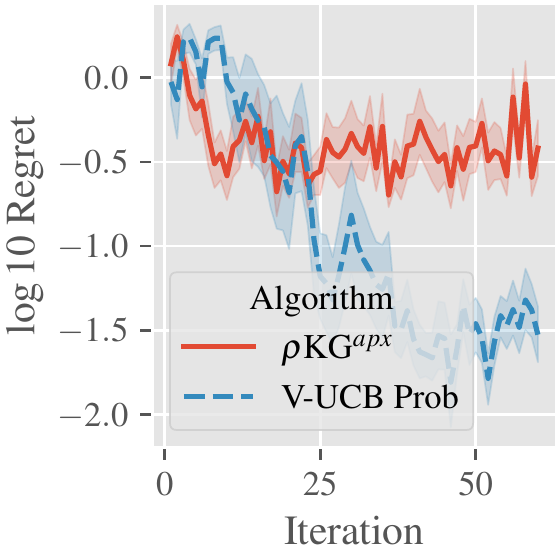}
\\
(a) Branin-Hoo-$(1,1)$
&
(b) Goldstein-Price-$(1,1)$
\\
\\
\includegraphics[height=\figheight\textwidth]{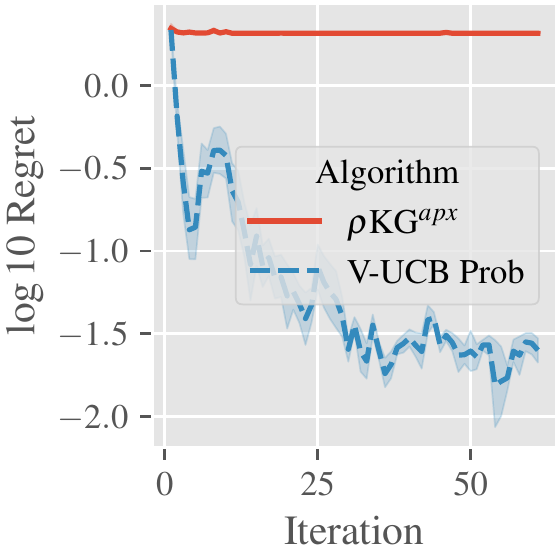}
&
\includegraphics[height=\figheight\textwidth]{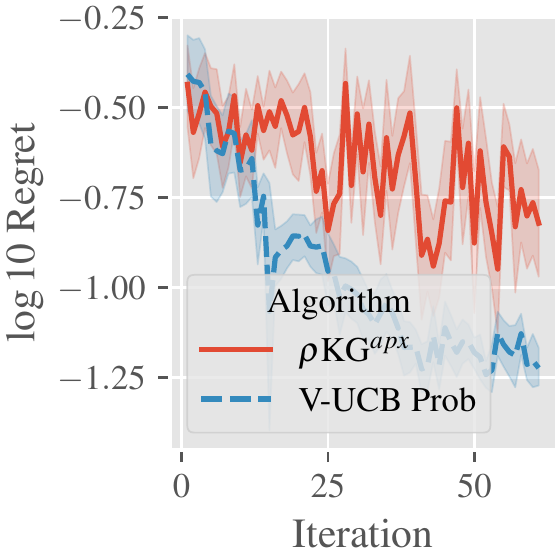}
\\
(c) Hartmann-$(1,2)$
&
(d) Hartmann-$(2,1)$
\end{tabular}
\caption{Synthetic benchmark functions with continuous $\dz$.}
\label{fig:syncont}
\end{figure}
\subsection{Simulated Optimization Problems}
\label{subsec:simulated}


The first problem is portfolio optimization adopted by \cite{borisk20}. 
There are $d_x = 3$ optimization variables (risk and trade aversion parameters, and the holding cost multiplier) and $d_z = 2$ environmental random variables (bid-ask spread and borrow cost). The variable $\mbf{Z}$ follows a discrete uniform distribution with $|\dz| = 100$. Hence, there is no difference between {\vy} Unif and {\vy} Prob. Thus, we only report the results of the latter.
The objective function is the posterior mean of a trained GP on the dataset in \citet{borisk20} of size $3000$ generated from CVXPortfolio. The noise variance $\sigma_n^2$ is set to $0.01$. The risk level $\alpha$ is set to $0.2$.

The second problem is a simulated robot pushing task for which we use the implementation from the work of \citet{wang17mes}.
The simulation is viewed as a $3$-dimensional function $\mbf{h}(r_x, r_y, t_p)$ returning the 2-D location of the pushed object, where $r_x,r_y \in [-5,5]$ are the robot location and $t_p \in [1,30]$ is the pushing duration. 
The objective is to minimize the distance to a fixed goal location $\mbf{g}=(g_x,g_y)^\top$, i.e., the objective function of the maximization problem is $f_0(r_x, r_y, t_p) = -\Vert \mbf{h}(r_x, r_y, t_p) - \mbf{g}\Vert$. We assume that there are perturbations in specifying the robot location $W_x, W_y$ whose support $\dz$ includes $64$ equi-distant points in $[-1,1]^2$ and whose probability mass is proportional to $\exp(-(W_x^2 + W_y^2) / 0.4^2)$. It leads to a random objective function $f(r_x, r_y, t_p, W_x, W_y) \triangleq f_0(r_x + W_x, r_y + W_y, t_p)$. 
We aim to maximize the {\var} of $f$ which is more difficult than maximizing that of $f_0$. 
Moreover, a random noise following $\mcl{N}(0,0.01)$ is added to the evaluation of $f$. 
The risk level $\alpha$ is set to $0.1$.

The results are shown in Fig.~\ref{fig:portrobot}. 
We observe that {\vy} outperforms {\kg} in both problems. Furthermore, in comparison to our synthetic experiments, the difference between {\vy} Unif and {\vy} Prob is not significant in the robot pushing experiment. This is because the chance that a uniform sample of LV has a large probability mass is higher in the robot pushing experiment due to a larger region of $\dz$ having high probabilities.

\begin{figure}[ht]
\centering
\begin{tabular}{@{}cc@{}}
\includegraphics[height=\figheight\textwidth]{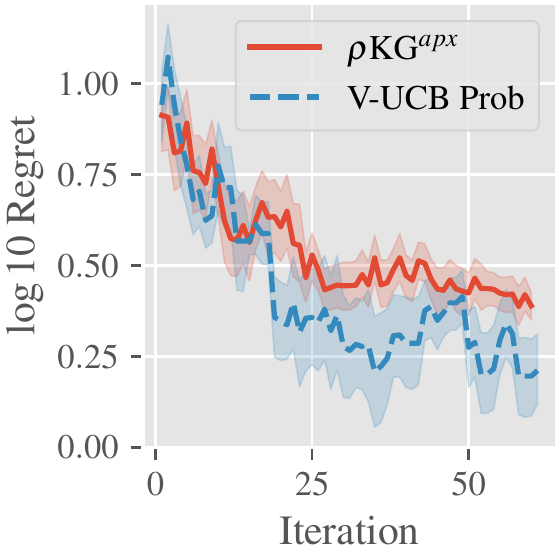}
&
\includegraphics[height=\figheight\textwidth]{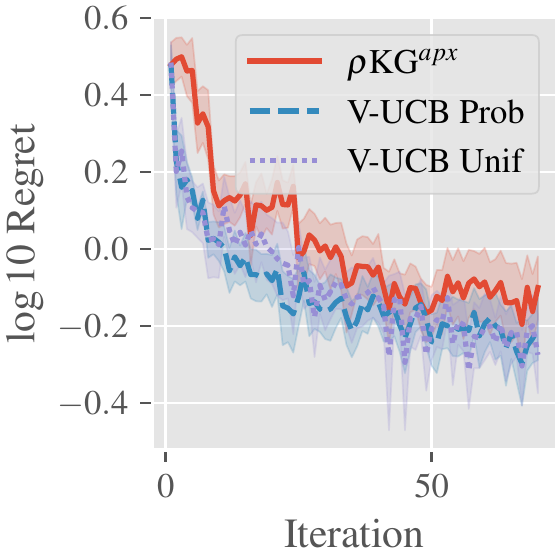}
\\
(a) Portfolio optimization $(3,2)$
&
(b) Robot pushing $(3,2)$
\end{tabular}
\caption{Simulated experiments.}
\label{fig:portrobot}
\end{figure}

\section{Conclusion}
\label{sec:conclusion}

To tackle the BO of {\var} problem, we construct a no-regret algorithm, namely \emph{{\var} upper confidence bound} (V-UCB), through the design of a confidence bound of {\var} and a set of \emph{lacing values} (LV) that is guaranteed to exist. Besides, we introduce a heuristic to select an LV that improves the emprical performance of {\vy} over random selection of LV. We also draw 
an elegant
connection between BO of {\var} and adversarially robust BO in terms of both problem formulation and solutions. Lastly, we provide practical techniques for implementing {\var} with continuous $\mbf{Z}$. 
While {\vy} is more computationally efficient than the the state-of-the-art {\kg} algorithm for BO of {\var}, it also demonstrates competitive empirical performances in our experiments.

\bibliography{bo}
\bibliographystyle{icml2021}

\clearpage 
\appendix

\section{Proof of Lemma~\ref{lemma:confbound}}
\label{app:proofvconfbound}

{\bf{Lemma~\ref{lemma:confbound}.}}
Similar to the definition of $f(\mbf{x},\mbf{Z})$, let $l_{t-1}(\mbf{x}, \mbf{Z})$ and $u_{t-1}(\mbf{x},\mbf{Z})$ denote the random function over $\mbf{x}$ where the randomness comes from the random variable $\mbf{Z}$; $l_{t-1}$ and $u_{t-1}$ are defined in \eqref{eq:fbound}.
Then, $\forall \mbf{x} \in \mcl{D}_{\mbf{x}}$, $t \ge 1$, $\alpha \in (0,1)$,
\[
\begin{array}{r@{}l}
\displaystyle V_\alpha(f(\mbf{x},\mbf{Z})) \displaystyle &\displaystyle \in I_{t-1}[V_{\alpha}(f(\mbf{x},\mbf{Z}))]\\
	 &\displaystyle \triangleq [V_\alpha(l_{t-1}(\mbf{x},\mbf{Z})), V_\alpha(u_{t-1}(\mbf{x},\mbf{Z}))]
\end{array}
\]
holds with probability $\ge 1 - \delta$ for $\delta$ in Lemma~\ref{lemma:ucb51}, where $V_\alpha(l_{t-1}(\mbf{x},\mbf{Z}))$ and $V_\alpha(u_{t-1}(\mbf{x},\mbf{Z}))$ are defined as \eqref{eq:var}.

\begin{proof}
Conditioned on the event $f(\mbf{x},\mbf{z}) \in I_{t-1}[f(\mbf{x},\mbf{z})] \triangleq [l_{t-1}(\mbf{x},\mbf{z}), u_{t-1}(\mbf{x},\mbf{z})]$ for all $\mbf{x} \in \dx$, $\mbf{z} \in \dz$, $t \ge 1$ which occurs with probability $\ge 1 - \delta$ for $\delta$ in Lemma~\ref{lemma:ucb51}, we will prove that $V_\alpha(l_{t-1}(\mbf{x},\mbf{Z})) \le V_\alpha(f(\mbf{x},\mbf{Z}))$. The proof of $V_\alpha(f(\mbf{x},\mbf{Z})) \le V_\alpha(u_{t-1}(\mbf{x},\mbf{Z}))$ can be done in a similar manner.

From $f(\mbf{x},\mbf{z}) \in I_{t-1}[f(\mbf{x},\mbf{z})] \triangleq [l_{t-1}(\mbf{x},\mbf{z}), u_{t-1}(\mbf{x},\mbf{z})]$ for all $\mbf{x} \in \dx$, $\mbf{z} \in \dz$, $t \ge 1$ we have $\forall \mbf{x} \in \dx, \mbf{z} \in \dz, t \ge 1,$
\begin{align*}
f(\mbf{x},\mbf{z}) &\ge l_{t-1}(\mbf{x},\mbf{z})\ .
\end{align*}
Therefore, for all $\omega \in \mbb{R}$, $\mbf{x} \in \dx$, $\mbf{z} \in \dz$, $t \ge 1$,
\begin{align*}
f(\mbf{x},\mbf{z}) \le \omega &\Rightarrow l_{t-1}(\mbf{x},\mbf{z}) \le \omega\\
P(f(\mbf{x},\mbf{Z}) \le \omega)  &\le P(l_{t-1}(\mbf{x},\mbf{Z}) \le \omega)\ .
\end{align*}
So, for all $\omega \in \mbb{R}$, $\alpha \in (0,1)$, $\mbf{x} \in \dx$, $t \ge 1$,
\begin{align*}
P(f(\mbf{x},\mbf{Z}) \le \omega) \ge \alpha \Rightarrow P(l_{t-1}(\mbf{x},\mbf{Z}) \le \omega) \ge \alpha\ .
\end{align*}
Hence, the set $\{\omega: P(f(\mbf{x},\mbf{Z}) \le \omega) \ge \alpha\}$ is a subset of $\{\omega: P(l_{t-1}(\mbf{x},\mbf{Z}) \le \omega) \ge \alpha\}$ for all $\alpha \in (0,1)$, $\mbf{x} \in \dx$, $t \ge 1$, which implies that $\inf\{\omega: P(f(\mbf{x},\mbf{Z}) \le \omega) \ge \alpha\} \ge \inf\{\omega: P(l_{t-1}(\mbf{x},\mbf{Z}) \le \omega) \ge \alpha\}$, i.e., $V_\alpha(l_{t-1}(\mbf{x},\mbf{Z})) \le V_\alpha(f(\mbf{x},\mbf{Z}))$ for all $\alpha \in (0,1)$, $\mbf{x} \in \dx$, $t \ge 1$.
\end{proof}

\section{Proof of \eqref{eq:iregretbound1}}
\label{app:iregretbound1}

We prove that 
\begin{align*}
r(\mbf{x}_t) \le V_\alpha(u_{t-1}(\mbf{x}_t,\mbf{Z})) - V_\alpha(l_{t-1}(\mbf{x}_t,\mbf{Z}))\ \forall t \ge 1
\end{align*}
which holds with probability $\ge 1 - \delta$ for $\delta$ in Lemma~\ref{lemma:ucb51}.

\begin{proof}
Conditioned on the event $V_\alpha(f(\mbf{x},\mbf{Z})) \in I_{t-1}[V_{\alpha}(f(\mbf{x},\mbf{Z}))] \triangleq [V_\alpha(l_{t-1}(\mbf{x},\mbf{Z})), V_\alpha(u_{t-1}(\mbf{x},\mbf{Z}))]$ for all $\alpha \in (0,1)$, $\mbf{x} \in \dx$, $t \ge 1$, which occurs with probability $\ge 1 - \delta$ in Lemma~\ref{lemma:confbound},
\begin{align*}
V_\alpha(f(\mbf{x}_*,\mbf{Z})) &\le V_\alpha(u_{t-1}(\mbf{x}_*,\mbf{Z}))\\
V_\alpha(f(\mbf{x}_t,\mbf{Z})) &\ge V_\alpha(l_{t-1}(\mbf{x}_t,\mbf{Z}))\ .
\end{align*}
Hence,
\begin{align}
r(\mbf{x}_t) &\triangleq V_\alpha(f(\mbf{x}_*,\mbf{Z})) - V_\alpha(f(\mbf{x}_t,\mbf{Z}))\nonumber\\
	&\le V_\alpha(u_{t-1}(\mbf{x}_*,\mbf{Z})) - V_\alpha(l_{t-1}(\mbf{x}_t,\mbf{Z}))\ .\label{eq:inter1irb1}
\end{align}
Since $\mbf{x}_t$ is selected as $\argmax_{\mbf{x} \in \dx} V_\alpha(u_{t-1}(\mbf{x},\mbf{Z}))$, 
\begin{align*}
V_\alpha(u_{t-1}(\mbf{x}_*,\mbf{Z})) \le V_\alpha(u_{t-1}(\mbf{x}_t,\mbf{Z}))\ ,
\end{align*}
equivalently, $V_\alpha(u_{t-1}(\mbf{x}_*,\mbf{Z})) - V_\alpha(l_{t-1}(\mbf{x}_t,\mbf{Z}))
\le V_\alpha(u_{t-1}(\mbf{x}_t,\mbf{Z})) - V_\alpha(l_{t-1}(\mbf{x}_t,\mbf{Z}))$. Hence, from \eqref{eq:inter1irb1}, $r(\mbf{x}_t) \le  V_\alpha(u_{t-1}(\mbf{x}_t,\mbf{Z})) - V_\alpha(l_{t-1}(\mbf{x}_t,\mbf{Z}))$ for all $\alpha \in (0,1)$ and $t \ge 1$.
\end{proof}

\section{Proof of Theorem~\ref{theorem:lv}}
\label{app:prooflv}

{\bf{Theorem~\ref{theorem:lv}}.}
$\forall \alpha \in (0,1)$, $\forall \mbf{x} \in \dx$, $\forall t \ge 1$, there exists a lacing value in $\dz$ with respect to $\mbf{x}$ and $t$.

\begin{proof}
Recall that 
\begin{align*}
\mcl{Z}_l^{\le}&\triangleq \{\mbf{z} \in \mcl{D}_{\mbf{z}}: l_{t-1}(\mbf{x},\mbf{z}) \le V_{\alpha}(l_{t-1}(\mbf{x}, \mbf{Z}))\}\ .
\end{align*}

From to the definition of $\mcl{Z}_l^\le$ and $V_{\alpha}(l_{t-1}(\mbf{x},\mbf{Z}))$, we have
\begin{align}
P(\mbf{Z} \in \mcl{Z}_l^\le) \ge \alpha\ .
\label{eq:pzl}
\end{align}
Since $\alpha \in (0,1)$, $\mcl{Z}_l^\le \neq \emptyset$.
We prove the existence of LV by contradiction: (a) assuming that $\exists \mbf{x} \in \dx, \exists t \ge 1, \forall \mbf{z} \in \mcl{Z}_l^{\le}, u_{t-1}(\mbf{x}, \mbf{z}) < V_{\alpha}(u_{t-1}(\mbf{x}, \mbf{Z}))$ and then, (b) proving that  $V_\alpha(u_{t-1}(\mbf{x},\mbf{Z}))$ is not a lower bound of $\{\omega: P(u_{t-1}(\mbf{x},\mbf{Z}) \le \omega) \ge \alpha\}$ which is a contradiction.



Since the GP posterior mean $\mu_{t-1}$ and posterior standard deviation $\sigma_{t-1}$ are continuous functions in $\dx \times \dz$,  $l_{t-1}$ and $u_{t-1}$ are continuous functions in the closed $\dz \subset \mbb{R}^{d_z}$ ($\mbf{x}$ and $t$ are given and remain fixed in this proof). 
We will prove that $\mcl{Z}_l^\le$ is closed in $\mbb{R}^{d_z}$ by contradiction.

If $\mcl{Z}_l^\le$ is not closed in $\mbb{R}^{d_z}$, there exists a limit point $\mbf{z}_p$ of $\mcl{Z}_l^\le$ such that $\mbf{z}_p \notin \mcl{Z}_l^\le$. Since $\mcl{Z}_l^\le \subset \dz$ and $\dz$ is closed in $\mbb{R}^{d_z}$, $\mbf{z}_p \in \dz$. Thus, for $\mbf{z}_p \notin \mcl{Z}_l^\le$, $l_{t-1}(\mbf{x},\mbf{z}_p) > V_\alpha(l_{t-1}(\mbf{x},\mbf{Z}))$ (from the definition of $\mcl{Z}_l^\le$). Then, there exists $\epsilon_0 > 0$ such that $l_{t-1}(\mbf{x},\mbf{z}_p) > V_\alpha(l_{t-1}(\mbf{x},\mbf{Z})) + \epsilon_0$. The pre-image of the open interval $I_0 = (l_{t-1}(\mbf{x},\mbf{z}_p) - \epsilon_0/2, l_{t-1}(\mbf{x},\mbf{z}_p) + \epsilon_0/2)$ under $l_{t-1}$ is also an open set $\mcl{V}$ containing $\mbf{z}_p$ (because $l_{t-1}$ is a continuous function). Since $\mbf{z}_p$ is a limit point of $\mcl{Z}_l^\le$, there exists an $\mbf{z}' \in \mcl{Z}_l^\le \cap \mcl{V}$. Then, $l_{t-1}(\mbf{x},\mbf{z}') \in I_0$, so $l_{t-1}(\mbf{x},\mbf{z}') \ge l_{t-1}(\mbf{x},\mbf{z}_p) - \epsilon_0/2 \ge V_\alpha(l_{t-1}(\mbf{x},\mbf{Z})) + \epsilon_0 - \epsilon_0 /2 = V_\alpha(l_{t-1}(\mbf{x},\mbf{Z})) + \epsilon_0/2$. It contradicts $\mbf{z}' \in \mcl{Z}_l^\le$.

Therefore, $\mcl{Z}_l^\le$ is a closed set in $\mbb{R}^{d_z}$. Besides, since $\{u_{t-1}(\mbf{x},\mbf{z}): \mbf{z} \in \mcl{Z}_l^\le\}$ is upper bounded by $V_\alpha(u_{t-1}(\mbf{x},\mbf{Z}))$ (due to our assumption), so $\sup\{ u_{t-1}(\mbf{x},\mbf{z}): \mbf{z} \in \mcl{Z}_l^\le \}$ exists.
Let  $\mbf{z}_l^+$ be such that $u_{t-1}(\mbf{x},\mbf{z}_l^+) \triangleq \sup\{ u_{t-1}(\mbf{x},\mbf{z}): \mbf{z} \in \mcl{Z}_l^\le \}$. Then, $\mbf{z}_l^+ \in \mcl{Z}_l^\le$ because $\mcl{Z}_l^\le$ is closed.

Moreover, from our assumption that $\forall \mbf{z} \in \mcl{Z}_l^{\le}, u_{t-1}(\mbf{x}, \mbf{z}) < V_{\alpha}(u_{t-1}(\mbf{x}, \mbf{Z}))$, we have $u_{t-1}(\mbf{x},\mbf{z}_l^+) < V_{\alpha}(u_{t-1}(\mbf{x}, \mbf{Z}))$. Furthermore, 
\begin{align*}
    P(u_{t-1}(\mbf{x}, \mbf{Z}) \le u_{t-1}(\mbf{x}, \mbf{z}_l^+)) \ge P(\mbf{Z} \in \mcl{Z}_l^{\le}) \ge \alpha\ .
\end{align*}
 where the first inequality is because $u_{t-1}(\mbf{x},\mbf{z}_l^+) = \sup\{ u_{t-1}(\mbf{x},\mbf{z}): \mbf{z} \in \mcl{Z}_l^\le \}$ and the last inequality is from \eqref{eq:pzl}.  Hence, $V_\alpha(u_{t-1}(\mbf{x},\mbf{Z})$ is not a lower bound of $\{\omega: P(u_{t-1}(\mbf{x},\mbf{Z}) \le \omega) \ge \alpha\}$.
\end{proof}

\section{Proof of Lemma~\ref{lemma:iregretbound2}}
\label{app:iregretbound2}

{\bf{Lemma~\ref{lemma:iregretbound2}.}}
By selecting $\mbf{x}_t$ as a maximizer of $V_\alpha(u_{t-1}(\mbf{x},\mbf{Z}))$ and selecting $\mbf{z}_t$ as an LV w.r.t $\mbf{x}_t$, the instantaneous regret is upper-bounded by:
\begin{equation*}
r(\mbf{x}_t) \le 2 \beta_t^{1/2} \sigma_{t-1}(\mbf{x}_t, \mbf{z}_t)\ \forall t \ge 1
\end{equation*}
with probability $\ge 1 - \delta$ for $\delta$ in Lemma~\ref{lemma:ucb51}.

\begin{proof}
Conditioned on the event $f(\mbf{x},\mbf{z}) \in I_{t-1}[f(\mbf{x},\mbf{z})] \triangleq [l_{t-1}(\mbf{x},\mbf{z}), u_{t-1}(\mbf{x},\mbf{z})]$ for all $\mbf{x} \in \dx$, $\mbf{z} \in \dz$, $t \ge 1$ which occurs with probability $\ge 1 - \delta$ in Lemma~\ref{lemma:ucb51}, it follows that $V_\alpha(f(\mbf{x},\mbf{Z})) \in I_{t-1}[V_{\alpha}(f(\mbf{x},\mbf{Z}))] \triangleq [V_\alpha(l_{t-1}(\mbf{x},\mbf{Z})), V_\alpha(u_{t-1}(\mbf{x},\mbf{Z}))]$ for all $\alpha \in (0,1)$, $\mbf{x} \in \dx$, and $t \ge 1$ in Lemma~\ref{lemma:confbound}.

From \eqref{eq:iregretbound1}, by selecting $\mbf{z}_t$ as an LV, for all $t \ge 1$,

\begin{align*}
r(\mbf{x}_t) &\le V_\alpha(u_{t-1}(\mbf{x}_t,\mbf{Z})) - V_\alpha(l_{t-1}(\mbf{x}_t,\mbf{Z}))\\
	&\le u_{t-1}(\mbf{x}_t, \mbf{z}_t) - l_{t-1}(\mbf{x}_t, \mbf{z}_t) \text{  (since } \mbf{z}_t \text{ is an LV)}\\
	&\le \mu_{t-1}(\mbf{x}_t,\mbf{z}_t) + \beta_t^{1/2} \sigma_{t-1}(\mbf{x}_t,\mbf{z}_t) \\
	&\quad- \mu_{t-1}(\mbf{x}_t,\mbf{z}_t) + \beta_t^{1/2} \sigma_{t-1}(\mbf{x}_t,\mbf{z}_t)\\
	&= 2 \beta_t^{1/2} \sigma_{t-1}(\mbf{x}_t,\mbf{z}_t)\ .
\end{align*}
\end{proof}

\section{Bound on the Average Cumulative Regret}
\label{app:rtbound}

Conditioned on the event $f(\mbf{x},\mbf{z}) \in I_{t-1}[f(\mbf{x},\mbf{z})] \triangleq [l_{t-1}(\mbf{x},\mbf{z}), u_{t-1}(\mbf{x},\mbf{z})]$ for all $\mbf{x} \in \dx$, $\mbf{z} \in \dz$, $t \ge 1$ which occurs with probability $\ge 1 - \delta$ in Lemma~\ref{lemma:ucb51}, it follows that $r(\mbf{x}_t) \le 2 \beta_t^{1/2} \sigma_{t-1}(\mbf{x}_t, \mbf{z}_t)\ \forall t \ge 1$ in Lemma~\ref{lemma:iregretbound2}. Therefore, 
\begin{align*}
R_T &\triangleq \sum_{t=1}^T r(\mbf{x}_t)
	\le \sum_{t=1}^T 2 \beta_t^{1/2} \sigma_{t-1}(\mbf{x}_t,\mbf{z}_t)\\
	&\le 2 \beta_T^{1/2} \sum_{t=1}^T \sigma_{t-1}(\mbf{x}_t,\mbf{z}_t)
\end{align*}
since $\beta_t$ is a non-decreasing sequence.

From Lemma 5.4 and the Cauchy-Schwarz inequality in \cite{srinivas10ucb}, we have 
\begin{align}
2 \sum_{t=1}^T \sigma_{t-1}(\mbf{x}_t,\mbf{z}_t) \le \sqrt{C_1 T \gamma_T}
\label{eq:frombogu}
\end{align}
where $C_1 = 8 / \log(1 + \sigma_n^{-2})$.
Hence,
\begin{align*}
R_T \le \sqrt{C_1 T \beta_T \gamma_T}\ .
\end{align*}
Equivalently, $R_T / T \le \sqrt{C_1 \beta_T \gamma_T/ T} \ .$
Since $\gamma_T$ is shown to be bounded for several common kernels in \cite{srinivas10ucb}, the above implies that $\text{lim}_{T \rightarrow \infty} R_T / T = 0$, i.e., the algorithm is no-regret.

\section{Bound on $r(\mbf{x}_{t_*(T)})$}
\label{app:recommendxbound}
Conditioned on the event $f(\mbf{x},\mbf{z}) \in I_{t-1}[f(\mbf{x},\mbf{z})] \triangleq [l_{t-1}(\mbf{x},\mbf{z}), u_{t-1}(\mbf{x},\mbf{z})]$ for all $\mbf{x} \in \dx$, $\mbf{z} \in \dz$, $t \ge 1$, which occurs with probability $\ge 1 - \delta$ in Lemma~\ref{lemma:ucb51}, it follows that $V_\alpha(f(\mbf{x},\mbf{Z})) \in I_{t-1}[V_{\alpha}(f(\mbf{x},\mbf{Z}))] \triangleq [V_\alpha(l_{t-1}(\mbf{x},\mbf{Z})), V_\alpha(u_{t-1}(\mbf{x},\mbf{Z}))]$ for all $\alpha \in (0,1), \mbf{x} \in \dx, t \ge 1$ in Lemma~\ref{lemma:confbound}. Furthermore, we select $\mbf{z}_t$ as an LV, so $l_{t-1}(\mbf{x}_t, \mbf{z}_t) \le V_\alpha(l_{t-1}(\mbf{x}_t,\mbf{Z})) \le V_\alpha(u_{t-1}(\mbf{x}_t,\mbf{Z})) \le u_{t-1}(\mbf{x}_t, \mbf{z}_t)$ according to the Definition~\ref{definition:lv}. 

At $T$-th iteration, by recommending $\mbf{x}_{t_*(T)}$ as an estimate of $\mbf{x}_*$ where $t_*(T) \triangleq \argmax_{t \in \{1,\dots,T\}} V_{\alpha}(l_{t-1}(\mbf{x}_t,\mbf{Z}))$, we have
\begin{align*}
V_{\alpha}(l_{t_*(T)-1}(\mbf{x}_{t_*(T)}, \mbf{Z})) 
&= \max_{t \in \{1,\dots,T\}} V_{\alpha}(l_{t-1}(\mbf{x}_t, \mbf{Z}))\\
&\ge \frac{1}{T} \sum_{t=1}^T V_{\alpha}(l_{t-1}(\mbf{x}_t, \mbf{Z}))\ .
\end{align*}
Therefore,
\begin{align*}
r(\mbf{x}_{t_*(T)}) &= V_\alpha(f(\mbf{x}_*,\mbf{Z})) - V_{\alpha}(f(\mbf{x}_{t_*(T)}, \mbf{Z}))\\
	&\le V_\alpha(f(\mbf{x}_*,\mbf{Z})) - V_{\alpha}(l_{t_*(T)-1}(\mbf{x}_{t_*(T)}, \mbf{Z}))\\
	&\le \frac{1}{T} \sum_{t=1}^T \left(V_\alpha(f(\mbf{x}_*,\mbf{Z})) - V_{\alpha}(l_{t-1}(\mbf{x}_t, \mbf{Z})) \right)\ .
\end{align*}
Furthermore, $V_\alpha(f(\mbf{x}_*,\mbf{Z})) \le V_\alpha(u_{t-1}(\mbf{x}_*,\mbf{Z}))$ from our condition, so
\begin{align*}
&r(\mbf{x}_{t_*(T)}) \le \frac{1}{T} \sum_{t=1}^T \left(V_\alpha(f(\mbf{x}_*,\mbf{Z})) - V_{\alpha}(l_{t-1}(\mbf{x}_t, \mbf{Z})) \right)\\
	&\le \frac{1}{T} \sum_{t=1}^T \left(V_\alpha(u_{t-1}(\mbf{x}_*,\mbf{Z})) - V_{\alpha}(l_{t-1}(\mbf{x}_t, \mbf{Z})) \right)\\
	&\le \frac{1}{T} \sum_{t=1}^T \left(V_\alpha(u_{t-1}(\mbf{x}_t,\mbf{Z})) - V_{\alpha}(l_{t-1}(\mbf{x}_t, \mbf{Z})) \right)\\
	&\le \frac{1}{T} \sum_{t=1}^T \left(u_{t-1}(\mbf{x}_t,\mbf{z}_t) - l_{t-1}(\mbf{x}_t, \mbf{z}_t) \right) \text{ (since } \mbf{z}_t \text{ is an LV)}\\
	&\le \frac{1}{T} \sum_{t=1}^T 2 \beta_t^{1/2} \sigma_{t-1}(\mbf{x}_t,\mbf{z}_t)\\
	&\le \sqrt{\frac{C_1 \beta_T \gamma_T}{T}} \text{ (from Appendix~\ref{app:rtbound})}\ .
\end{align*}

Since $\gamma_T$ is shown to be bounded for several common kernels in \cite{srinivas10ucb}, the above implies that $\text{lim}_{T \rightarrow \infty} r(\mbf{x}_{t_*(T)}) = 0$.

\section{Proof of Theorem~\ref{theorem:0plus} and Its Corollaries}
\label{app:0plus}

\subsection{Proof of Theorem~\ref{theorem:0plus}}

{\bf{Theorem~\ref{theorem:0plus}.}}
Let $\mbf{W}$ be a random variable with the support $\mcl{D}_w \subset \mbb{R}^{d_w}$ and dimension $d_w$.
Let $h$ be a continuous function mapping from $\mbf{w} \in \mcl{D}_w$ to $\mbb{R}$. Then, $h(\mbf{W})$ denotes the random variable whose realization is the function $h$ evaluation at a realization $\mbf{w}$ of $\mbf{W}$. Suppose $h(\mbf{w})$ has a minimizer $\mbf{w}_{\min} \in \mcl{D}_w$, then $\lim_{\alpha \rightarrow 0^+} V_{\alpha}(h(\mbf{W})) = h(\mbf{w}_{\min})\ .$

Recall that the support $\mcl{D}_w$ of $\mbf{W}$ is defined as the smallest closed subset $\mcl{D}_w$ of $\mbb{R}^{d_z}$ such that $P(\mbf{W} \in \mcl{D}_w) = 1$, and $\mbf{w}_{\min} \in \mcl{D}_w$ minimizes $h(\mbf{w})$.

\begin{lemma}
\label{lemma:prenondecreaseV}
For all $\alpha \in (0,1)$, $V_\alpha(h(\mbf{W}))$ is a nondecreasing function, i.e.,
\begin{align*}
\forall\ 1 > \alpha > \alpha' > 0,\ V_\alpha(h(\mbf{W})) \ge V_{\alpha'}(h(\mbf{W}))\ .
\end{align*}
\end{lemma}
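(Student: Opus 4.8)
The plan is to establish this monotonicity directly from the infimum-based definition of {\var} in \eqref{eq:var}, exploiting the fact that the defining threshold constraint $P(h(\mbf{W}) \le \omega) \ge \alpha$ becomes harder to satisfy as $\alpha$ increases. The argument is structurally identical to the set-inclusion reasoning used in the proof of Lemma~\ref{lemma:confbound}.

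First I would introduce, for each level $\beta \in (0,1)$, the set $S_\beta \triangleq \{\omega \in \mbb{R}: P(h(\mbf{W}) \le \omega) \ge \beta\}$, so that by definition $V_\beta(h(\mbf{W})) = \inf S_\beta$. Before comparing infima I would note that each such set is well-behaved: since $h$ attains its minimum at $\mbf{w}_{\min}\in\mcl{D}_w$, we have $h(\mbf{W}) \ge h(\mbf{w}_{\min})$ almost surely, so $S_\beta$ is bounded below by $h(\mbf{w}_{\min})$; and since $P(h(\mbf{W}) \le \omega) \to 1 > \beta$ as $\omega \to \infty$, the set $S_\beta$ is nonempty. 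Hence each $\inf S_\beta$ is a well-defined real number.

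The key step is to fix $1 > \alpha > \alpha' > 0$ and show $S_\alpha \subseteq S_{\alpha'}$. Taking any $\omega \in S_\alpha$, we have $P(h(\mbf{W}) \le \omega) \ge \alpha$, and since $\alpha > \alpha'$ this immediately gives $P(h(\mbf{W}) \le \omega) \ge \alpha'$, i.e., $\omega \in S_{\alpha'}$. Thus $S_\alpha \subseteq S_{\alpha'}$. I would then invoke the elementary fact that the infimum of a set is no smaller than the infimum of any superset: from $S_\alpha \subseteq S_{\alpha'}$ it follows that $\inf S_\alpha \ge \inf S_{\alpha'}$, which is exactly $V_\alpha(h(\mbf{W})) \ge V_{\alpha'}(h(\mbf{W}))$.

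There is no substantial obstacle here, as the proof is essentially the one-line monotonicity of a generalized quantile. The only point requiring care is getting the direction of the inequality right: a larger $\alpha$ yields a smaller constraint-satisfying set, and the infimum over a smaller set is larger. I would emphasize this orientation explicitly so the reader sees that the shrinking of $S_\beta$ in $\beta$ is precisely what forces {\var} to be nondecreasing.
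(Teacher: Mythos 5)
Your proof is correct and follows essentially the same route as the paper's: the inclusion $\{\omega: P(h(\mbf{W}) \le \omega) \ge \alpha\} \subseteq \{\omega: P(h(\mbf{W}) \le \omega) \ge \alpha'\}$ for $\alpha > \alpha'$, followed by the fact that infima reverse set inclusion. Your additional check that each set $S_\beta$ is nonempty and bounded below (so the infima are finite) is a harmless refinement the paper omits.
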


\begin{proof}
Since $\alpha > \alpha'$, for all $\omega \in \mbb{R}$,
\begin{align*}
P(h(\mbf{W}) \le \omega) \ge \alpha \Rightarrow P(h(\mbf{W}) \le \omega) \ge \alpha'\ .
\end{align*}
Therefore, $\{\omega: P(h(\mbf{W}) \le \omega) \ge \alpha\}$ is a subset of $\{\omega: P(h(\mbf{W}) \le \omega) \ge \alpha'\}$. Thus,
\begin{align*}
&\inf \{\omega: P(h(\mbf{W}) \le \omega) \ge \alpha\} \\
&\ge \inf \{\omega: P(h(\mbf{W}) \le \omega) \ge \alpha'\}
\end{align*}
i.e., $V_\alpha(h(\mbf{W})) \ge V_{\alpha'}(h(\mbf{W}))\ .$
\end{proof}

Let 
\begin{align}
\omega_{0^+} \triangleq \lim_{\alpha \rightarrow 0^+} V_\alpha(h(\mbf{W}))\ .
\label{eq:omega0plus}
\end{align}
Then, from Lemma~\ref{lemma:prenondecreaseV}, the following lemma follows.
\begin{lemma}
\label{lemma:nondecreaseV}
For all $\alpha \in (0,1)$, and $\omega_{0^+}$ defined in \eqref{eq:omega0plus}
\begin{align*}
\omega_{0^+} \le V_\alpha(h(\mbf{W}))\ .
\end{align*}
\end{lemma}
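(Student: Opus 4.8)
The plan is to obtain Lemma~\ref{lemma:nondecreaseV} directly from the monotonicity established in Lemma~\ref{lemma:prenondecreaseV}, simply rephrased in terms of the limiting value $\omega_{0^+}$. First I would fix an arbitrary $\alpha \in (0,1)$ and consider any $\alpha'$ with $0 < \alpha' < \alpha$. Applying Lemma~\ref{lemma:prenondecreaseV} to the pair $(\alpha, \alpha')$ gives $V_{\alpha'}(h(\mbf{W})) \le V_\alpha(h(\mbf{W}))$, so the fixed quantity $V_\alpha(h(\mbf{W}))$ is an upper bound for the entire family $\{V_{\alpha'}(h(\mbf{W}))\}_{0 < \alpha' < \alpha}$ (note $V_\alpha$ does not depend on $\alpha'$).

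Next I would confirm that the limit $\omega_{0^+} = \lim_{\alpha' \rightarrow 0^+} V_{\alpha'}(h(\mbf{W}))$ in \eqref{eq:omega0plus} is well defined. Since $V_{\alpha'}(h(\mbf{W}))$ is nondecreasing in $\alpha'$ by Lemma~\ref{lemma:prenondecreaseV}, its right-hand limit at $0$ exists and coincides with $\inf_{0 < \alpha' < 1} V_{\alpha'}(h(\mbf{W}))$; the limit is moreover finite because $h(\mbf{w}) \ge h(\mbf{w}_{\min})$ forces $V_{\alpha'}(h(\mbf{W})) \ge h(\mbf{w}_{\min})$, bounding the family from below. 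Letting $\alpha' \rightarrow 0^+$ in the inequality $V_{\alpha'}(h(\mbf{W})) \le V_\alpha(h(\mbf{W}))$ and using that non-strict inequalities are preserved under limits then yields $\omega_{0^+} \le V_\alpha(h(\mbf{W}))$. As $\alpha \in (0,1)$ was arbitrary, the claim holds for all such $\alpha$.

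I do not expect a real obstacle here, since the statement is essentially monotonicity read off at the left endpoint. The only point deserving a line of care is justifying that $\omega_{0^+}$ is a genuine (finite) limit rather than merely an infimum or a possibly infinite value; this is exactly where the existence of the minimizer $\mbf{w}_{\min}$ enters, supplying the lower bound $h(\mbf{w}_{\min})$. This same lower bound is the quantity that the surrounding Theorem~\ref{theorem:0plus} will ultimately match from above, so isolating it cleanly here sets up the complementary inequality $\omega_{0^+} \ge h(\mbf{w}_{\min})$ that completes the theorem.
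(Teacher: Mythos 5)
Your proposal is correct and follows essentially the same route as the paper, which derives this lemma directly from the monotonicity of $V_\alpha(h(\mbf{W}))$ in $\alpha$ (Lemma~\ref{lemma:prenondecreaseV}) by letting $\alpha' \rightarrow 0^+$ in $V_{\alpha'}(h(\mbf{W})) \le V_\alpha(h(\mbf{W}))$. The only difference is that you additionally verify that $\omega_{0^+}$ is a well-defined finite limit (via the lower bound $h(\mbf{w}_{\min})$), a point the paper leaves implicit; this is a harmless and indeed welcome refinement.
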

We use Lemma~\ref{lemma:nondecreaseV} to prove the following lemma.
\begin{lemma} 
\label{lemma:omegalehw}
For all $\mbf{w} \in \mcl{D}_w$, and $\omega_{0^+}$ defined in \eqref{eq:omega0plus}
\begin{align*}
\omega_{0^+} \le h(\mbf{w})
\end{align*}
which implies that
\begin{align*}
\omega_{0^+} \le h(\mbf{w}_{\min})\ .
\end{align*}
\end{lemma}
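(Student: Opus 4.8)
The plan is to exploit the defining property of the support $\mcl{D}_w$: every point $\mbf{w} \in \mcl{D}_w$ has the property that each of its open neighborhoods carries strictly positive probability. Indeed, if some open neighborhood $\mcl{V}$ of a support point $\mbf{w}$ had $P(\mbf{W} \in \mcl{V}) = 0$, then $\mcl{D}_w \setminus \mcl{V}$ would be a strictly smaller closed set still carrying probability $1$, contradicting the minimality in the definition of $\mcl{D}_w$. I would first record this fact, since it is the only place the support assumption enters.

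Fix an arbitrary $\mbf{w} \in \mcl{D}_w$ and an arbitrary $\epsilon > 0$. Since $h$ is continuous at $\mbf{w}$, there exists an open neighborhood $\mcl{V}$ of $\mbf{w}$ such that $h(\mbf{w}') < h(\mbf{w}) + \epsilon$ for all $\mbf{w}' \in \mcl{V}$. By the support fact above, $p \triangleq P(\mbf{W} \in \mcl{V}) > 0$. Consequently,
\[
P(h(\mbf{W}) \le h(\mbf{w}) + \epsilon) \ge P(\mbf{W} \in \mcl{V}) = p > 0 .
\]
Now pick any $\alpha$ with $0 < \alpha \le p$. Then $h(\mbf{w}) + \epsilon$ belongs to the set $\{\omega : P(h(\mbf{W}) \le \omega) \ge \alpha\}$, so by the definition of $V_\alpha$ as an infimum,
\[
V_\alpha(h(\mbf{W})) = \inf\{\omega : P(h(\mbf{W}) \le \omega) \ge \alpha\} \le h(\mbf{w}) + \epsilon .
\]

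Combining this with Lemma~\ref{lemma:nondecreaseV}, which gives $\omega_{0^+} \le V_\alpha(h(\mbf{W}))$ for every $\alpha \in (0,1)$, yields $\omega_{0^+} \le h(\mbf{w}) + \epsilon$. Since $\epsilon > 0$ was arbitrary, letting $\epsilon \to 0^+$ gives $\omega_{0^+} \le h(\mbf{w})$, and since $\mbf{w} \in \mcl{D}_w$ was arbitrary this holds for all support points; specializing to $\mbf{w} = \mbf{w}_{\min}$ gives the asserted $\omega_{0^+} \le h(\mbf{w}_{\min})$. I expect the only genuinely delicate step to be the support argument establishing $P(\mbf{W} \in \mcl{V}) > 0$; the rest is a routine $\epsilon$-chase through continuity, the infimum characterization of $V_\alpha$, and the monotonicity already proved in Lemma~\ref{lemma:nondecreaseV}. (Together with the complementary bound $\omega_{0^+} \ge h(\mbf{w}_{\min})$ this would pin down $\omega_{0^+} = h(\mbf{w}_{\min})$ and complete Theorem~\ref{theorem:0plus}, but that reverse inequality is a separate step.)
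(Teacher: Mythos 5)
Your proposal is correct and uses essentially the same ingredients as the paper's proof: the minimality of the support to show every neighborhood of a support point has positive probability, continuity of $h$ to localize $h$ near $h(\mbf{w})$, the infimum characterization of $V_\alpha$, and Lemma~\ref{lemma:nondecreaseV}. The only difference is cosmetic---you argue directly ($\omega_{0^+} \le V_\alpha(h(\mbf{W})) \le h(\mbf{w}) + \epsilon$ for all $\epsilon > 0$) where the paper wraps the identical chain of facts in a proof by contradiction.
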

\begin{proof}
By contradiction, we assume that there exits $\mbf{w}' \in \mcl{D}_w$ such that $\omega_{0^+} > h(\mbf{w}')$. 
Then, there exists $\epsilon_1 > 0$ such that $\omega_{0^+} > h(\mbf{w}') + \epsilon_1$. 
Consider the pre-image $\mcl{V}$ of the open interval $I_h = (h(\mbf{w}') - \epsilon_1/2, h(\mbf{w}') + \epsilon_1/2$. 
Since $h$ is a continuous function, $\mcl{V}$ is an open set and it contains $\mbf{w}'$ (as $I_h$ contains $h(\mbf{w}')$).
Then, consider the set $\mcl{V} \cap \mcl{D}_w \supset \{\mbf{w}'\} \neq \emptyset$, we prove $P(\mbf{W} \in \mcl{V} \cap \dz) > 0$ by contradiction as follows. 

If $P(\mbf{W} \in \mcl{V} \cap \mcl{D}_w) = 0$ then the closure of $\mcl{D}_w \setminus \mcl{V}$ is a closed set that is smaller than $\mcl{D}_w$ (since $\mcl{V}$ is an open set, $\mcl{D}_w$ is a closed set, and $\mcl{V} \cap \mcl{D}_w$  is not empty) and satisfies $P(\mbf{W} \in \mcl{D}_w \setminus \mcl{V}) = 1$, which contradicts the definition of $\mcl{D}_w$. Thus, $P(\mbf{W} \in \mcl{V} \cap \mcl{D}_w) > 0$.

Therefore, $P(h(\mbf{W}) \in I_h) > 0$. So,
\begin{align*}
&P(h(\mbf{W}) \le \omega_{0^+})\\
&\ge P(h(\mbf{W}) \le h(\mbf{w}') + \epsilon_1 / 2)\\
&\ge P(h(\mbf{W}) \in I_h)\\
&> 0\ .
\end{align*}
Let us consider $\alpha_0 = P(h(\mbf{W}) \le h(\mbf{w}') + \epsilon_1/2) > 0$, the {\var} at $\alpha_0$ is  
\begin{align*}
    V_{\alpha_0}(h(\mbf{W})) &\triangleq \inf\{\omega: P(h(\mbf{W}) \le \omega) \ge \alpha_0\}\\
    &\le h(\mbf{w}') + \epsilon_1/2\\
    &< \omega_{0^+}
\end{align*}
which is a contradiction to Lemma~\ref{lemma:nondecreaseV}. 
\end{proof}
\begin{lemma}
\label{lemma:omegagemin}
For $\omega_{0^+}$ defined in \eqref{eq:omega0plus}
\begin{align}
\omega_{0^+} \ge h(\mbf{w}_{\min})\ .
\end{align}
\end{lemma}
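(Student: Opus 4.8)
The plan is to establish the reverse inequality to Lemma~\ref{lemma:omegalehw}, so that combining the two yields $\omega_{0^+} = h(\mbf{w}_{\min})$ and thereby completes the proof of Theorem~\ref{theorem:0plus}. The key observation is that, because $\mbf{w}_{\min}$ is a \emph{global} minimizer of $h$ over $\mcl{D}_w$, the random variable $h(\mbf{W})$ is bounded below by $h(\mbf{w}_{\min})$ almost surely: every realization $\mbf{w} \in \mcl{D}_w$ satisfies $h(\mbf{w}) \ge h(\mbf{w}_{\min})$, and $P(\mbf{W} \in \mcl{D}_w) = 1$ by the definition of the support.

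First I would fix an arbitrary $\alpha \in (0,1)$ and argue that $V_\alpha(h(\mbf{W})) \ge h(\mbf{w}_{\min})$. To this end, consider any $\omega < h(\mbf{w}_{\min})$. Since $h(\mbf{W}) \ge h(\mbf{w}_{\min}) > \omega$ with probability one, we have $P(h(\mbf{W}) \le \omega) = 0 < \alpha$. Hence no such $\omega$ belongs to the set $\{\omega : P(h(\mbf{W}) \le \omega) \ge \alpha\}$, so this set is contained in $[h(\mbf{w}_{\min}), \infty)$ and its infimum is at least $h(\mbf{w}_{\min})$. By the definition \eqref{eq:var} of {\var}, this gives $V_\alpha(h(\mbf{W})) \ge h(\mbf{w}_{\min})$.

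Finally, since this bound holds for every $\alpha \in (0,1)$ and the limit in \eqref{eq:omega0plus} exists (the map $\alpha \mapsto V_\alpha(h(\mbf{W}))$ is nondecreasing by Lemma~\ref{lemma:prenondecreaseV} and bounded below by $h(\mbf{w}_{\min})$), I would take $\alpha \rightarrow 0^+$ while preserving the inequality to conclude $\omega_{0^+} = \lim_{\alpha \rightarrow 0^+} V_\alpha(h(\mbf{W})) \ge h(\mbf{w}_{\min})$.

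I do not anticipate a genuine obstacle here: unlike the companion bound in Lemma~\ref{lemma:omegalehw}, which required continuity of $h$ together with the minimality of the support to guarantee positive probability in a neighborhood of $\mbf{w}'$, this direction uses only the \emph{global} minimality of $\mbf{w}_{\min}$ and the trivial almost-sure lower bound on $h(\mbf{W})$. The one point to state carefully is that $\omega < h(\mbf{w}_{\min})$ forces $P(h(\mbf{W}) \le \omega) = 0$ exactly, rather than merely a small positive value, which is precisely where the global-minimizer hypothesis does all the work.
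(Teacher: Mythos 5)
Your proof is correct, and it takes a genuinely more direct route than the paper's. The paper argues by contradiction at the level of the limit: assuming $\omega_{0^+} < h(\mbf{w}_{\min})$, it picks $\epsilon_2 > 0$ and then an $\alpha_0 > 0$ with $V_{\alpha_0}(h(\mbf{W})) < \omega_{0^+} + \epsilon_2 < h(\mbf{w}_{\min})$, and derives $P(h(\mbf{W}) \le V_{\alpha_0}(h(\mbf{W}))) = 0$, which ``contradicts $\alpha_0 > 0$.'' That final step implicitly invokes the attainment property of {\var}, namely $P\bigl(h(\mbf{W}) \le V_{\alpha}(h(\mbf{W}))\bigr) \ge \alpha$, which holds by right-continuity of the distribution function but is never stated or proved in the paper. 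Your argument sidesteps this entirely: you prove the stronger pointwise bound $V_\alpha(h(\mbf{W})) \ge h(\mbf{w}_{\min})$ for \emph{every} $\alpha \in (0,1)$ straight from the definition \eqref{eq:var} --- since $h(\mbf{W}) \ge h(\mbf{w}_{\min})$ almost surely, any $\omega < h(\mbf{w}_{\min})$ has $P(h(\mbf{W}) \le \omega) = 0 < \alpha$, so the set whose infimum defines {\var} is contained in $[h(\mbf{w}_{\min}), \infty)$ --- and then you pass to the limit $\alpha \rightarrow 0^+$, which preserves the non-strict inequality. Both proofs rest on the same key fact (the almost-sure lower bound on $h(\mbf{W})$ coming from global minimality of $\mbf{w}_{\min}$ over the support), but yours is more elementary: it needs neither contradiction nor the attainment/right-continuity property, it correctly notes that monotonicity (Lemma~\ref{lemma:prenondecreaseV}) plus the lower bound justifies existence of the limit in \eqref{eq:omega0plus}, and it yields the uniform-in-$\alpha$ bound $V_\alpha(h(\mbf{W})) \ge h(\mbf{w}_{\min})$ as a useful byproduct.
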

\begin{proof}
By contradiction, we assume that $\omega_{0^+} < h(\mbf{w}_{\min})$. Then there exists $\epsilon_2 > 0$ that $\omega_{0^+} + \epsilon_2 < h(\mbf{w}_{\min})$. 
Since $\omega_{0^+} \triangleq \lim_{\alpha \rightarrow 0^+} V_{\alpha}(h(\mbf{W}))$ so there exits $\alpha_0 > 0$ such that $V_{\alpha_0}(h(\mbf{W})) \in (\omega_{0^+}, \omega_{0^+} + \epsilon_2)$. 
However, 
\begin{align*}
&P(h(\mbf{W}) \le V_{\alpha_0}(h(\mbf{W}))) \\
&\le P(h(\mbf{W}) \le \omega_{0^+} + \epsilon_2 < h(\mbf{w}_{\min}))\\
&= 0
\end{align*} 
which contradicts the fact that $\alpha_0 > 0$.
Therefore, $\omega_{0^+} \ge h(\mbf{w}_{\min})$.
\end{proof}

From \eqref{eq:omega0plus}, Lemma~\ref{lemma:omegalehw} and Lemma~\ref{lemma:omegagemin}, 
\begin{align*}
\lim_{\alpha \rightarrow 0^+} V_\alpha(h(\mbf{W})) = h(\mbf{w}_{\min})
\end{align*}
which directly leads to the result in Corollary~\ref{corollary:alpha0lv} for a continuous function $f(\mbf{x},\mbf{z})$ over $\mbf{z} \in \dz$.
While $\mbf{Z}$ can follow any probability distribution defined on the support $\dz$, we can choose the distribution of $\mbf{Z}$ as a uniform distribution over $\dz$.

\subsection{Corollary~\ref{corollary:stableopt}}

From Theorem~\ref{theorem:0plus}, $\dz$ is a closed subset of $\mbb{R}^{d_z}$, and $u_{t-1}(\mbf{x},\mbf{z})$, $l_{t-1}(\mbf{x},\mbf{z})$ are continuous functions over $\mbf{z} \in \dz$, it follows that the selected $\mbf{x}_t$ by both {\stableopt} (in \eqref{eq:stableopt}) and {\vy} are the same. Furthermore, 
\begin{align*}
\mcl{Z}_l^\le &\triangleq \{ \mbf{z} \in \dz: l_{t-1}(\mbf{x},\mbf{z}) \le V_\alpha(l_{t-1}(\mbf{x},\mbf{Z})) \}\\
	&=  \{ \mbf{z} \in \dz: l_{t-1}(\mbf{x},\mbf{z}) \le \min_{\mbf{z}' \in \dz}l_{t-1}(\mbf{x},\mbf{z}') \}\\
	&=  \{ \mbf{z} \in \dz: l_{t-1}(\mbf{x},\mbf{z}) = \min_{\mbf{z}' \in \dz}l_{t-1}(\mbf{x},\mbf{z}') \}\ ,\\
\mcl{Z}_u^\ge &\triangleq \{ \mbf{z} \in \dz: u_{t-1}(\mbf{x},\mbf{z}) \ge V_\alpha(u_{t-1}(\mbf{x},\mbf{Z})) \}\\
	&= \{ \mbf{z} \in \dz: u_{t-1}(\mbf{x},\mbf{z}) \ge \min_{\mbf{z}' \in \dz} u_{t-1}(\mbf{x},\mbf{z}') \}\\
	&= \dz\ .
\end{align*}

Therefore, the set of lacing values is $\mcl{Z}_l^\le \cap \mcl{Z}_u^\ge = \mcl{Z}_l^\le = \{ \mbf{z} \in \dz: l_{t-1}(\mbf{x},\mbf{z}) = \min_{\mbf{z}' \in \dz}l_{t-1}(\mbf{x},\mbf{z}') \}$ any of which is also the selected $\mbf{z}_t$ in \eqref{eq:stableopt} by {\stableopt}. Thus, the selected $\mbf{z}_t$ by both {\stableopt} and {\vy} are the same.

\section{Local Neural Surrogate Optimization}
\label{app:lnso}

The \emph{local neural surrogate optimization} (LNSO) to maximize a {\var} $V_\alpha(h(\mbf{x},\mbf{Z}))$ is described in Algorithm~\ref{alg:lnso}. The algorithm can be summarized  as follows:
\begin{itemize}
\item Whenever the current updated $\mbf{x}^{(i)}$ is not in $\mcl{B}(\mbf{x}_c,r)$ (line 4), the center $\mbf{x}_c$ of the ball $\mcl{B}$ is updated to be $\mbf{x}^{(i)}$ (line 6) and the surrogate function $g(\mbf{x},\bm{\theta})$ is re-trained (lines 7-12).

\item The surrogate function $g(\mbf{x},\bm{\theta})$ is (re-)trained to estimate $V_\alpha(h(\mbf{x},\mbf{Z}))$ well for all $\mbf{x} \in \mcl{B}(\mbf{x}_c, r)$ (lines 7-12) with stochastic gradient descent by minimizing the following loss function given random mini-batches $
\mcl{Z}$ of $\mbf{Z}$ (line 8) and $\mcl{X}$ of $\mbf{x} \in \mcl{B}(\mbf{x}_c,r)$ (line 9):
\begin{align}
\mcl{L}_g(\mcl{X},\mcl{Z}) \triangleq \frac{1}{|\mcl{X}| |\mcl{Z}|} \sum_{\mbf{x} \in \mcl{X},\mbf{z} \in \mcl{Z}} [\rho_\alpha(h(\mbf{x},\mbf{z}) - g(\mbf{x};\bm{\theta}))]
\label{eq:lossg}
\end{align}
where $\rho_{\alpha}$ is the pinball function in Sec.~\ref{sec:continuousz}.

\item Instead of directly maximizing $V_\alpha(h(\mbf{x},\mbf{Z}))$ whose gradient w.r.t $\mbf{x}$ is unavailable, we find $\mbf{x}$ that maximizes the surrogate function $g(\mbf{x},\bm{\theta}_s)$ (line 14) where $\bm{\theta}_s$ is the parameters trained in lines 7-12.

\end{itemize}
\begin{algorithm}[tb]
   \caption{LNSO of $V_\alpha(h(\mbf{x},\mbf{Z}))$}
\begin{algorithmic}[1]
   \STATE {\bfseries Input:} target function $h$; domain $\dx$; initializer $\mbf{x}^{(0)}$; $\alpha$; a generator of $\mbf{Z}$ samples \texttt{gen\_Z}; radius $r$; no. of training iterations $t_v$, $t_g$; optimization stepsizes $\gamma_x$, $\gamma_g$

   \STATE Randomly initialize $\bm{\theta}_s$
   \FOR{$i=1,2,\dots, t_v$}
      \IF{$i=1$ or $\Vert \mbf{x}^{(i)} - \mbf{x}_c \Vert \ge \delta_x$}
      \STATE Initialize $\bm{\theta}^{(0)} = \bm{\theta}_s$
      \STATE Update the center of $\mcl{B}$: $\mbf{x}_c = \mbf{x}^{(i)}$
      \FOR{$j=1,2,\dots, t_g$}
      \STATE Draw $n_z$ samples of $\mbf{Z}$: $\mcl{Z} =\texttt{gen\_Z}(n_z)$.
      \STATE Draw a set $\mcl{X}$ of $n_x$ uniformly distributed samples in $\mcl{B}(\mbf{x}_c,r)$.
      \STATE Update 
        $\bm{\theta}^{(j)} = \bm{\theta}^{(j-1)} - \gamma_g \frac{\text{d} \mcl{L}_g(\mcl{X},\mcl{Z})}{\text{d} \bm{\theta}}\Big|_{\bm{\theta} = \bm{\theta}^{(j-1)}}$ where $\mcl{L}_g(\mcl{X},\mcl{Z})$ is defined in \eqref{eq:lossg}.
      \ENDFOR
      \STATE $\bm{\theta}_s = \bm{\theta}_{t_g}$
      \ENDIF
   \STATE Update $\mbf{x}^{(i)} = \mbf{x}^{(i-1)} + \gamma_x \frac{\text{d}g(\mbf{x};\bm{\theta}_s)}{ \text{d}\mbf{x}}\big|_{\mbf{x} = \mbf{x}^{(i-1)}}$.
   \STATE Project $\mbf{x}^{(i)}$ into $\dx$.
   \ENDFOR
   \STATE Return $\mbf{x}^{(t_v)}$
\end{algorithmic}
\label{alg:lnso}
\end{algorithm}

\section{Experimental Details}
\label{app:experiment}

Regarding the construction of $\dz$ in optimizing the synthetic benchmark functions, the discrete $\dz$ is selected as equi-distanct points (e.g., by dividing $[0,1]^{d_z}$ into a grid).
The probability mass of $\mbf{Z}$ is defined as $P(\mbf{Z} = \mbf{z}) \propto \exp(-(\mbf{z} - 0.5)^2 / 0.1^2)$ (the subtraction $\mbf{z} - 0.5$ is elementwise). 
The continuous $\mbf{Z}$ follows a $2$-standard-deviation truncated independent Gaussian distribution with the mean of $0.5$ and standard deviation $0.125$. It is noted that when $\dz$ is discrete, there is a large region of $\mbf{Z}$ with low probability $P(\mbf{Z})$ in experiments with synthetic benchmark functions. This is to highlight the advantage of {\vy} Prob in exploiting $P(\mbf{Z})$ compared with {\vy} Unif. In the robot pushing experiment, the region of $\mbf{Z}$ with low probability is smaller than that in the experiments with synthetic benchmark functions (e.g., Hartmann-$(1,2)$), which is illustrated in Fig.~\ref{fig:zprob}. Therefore, the gap in the performance between {\vy} Unif and {\vy} Prob is smaller in the robot pushing pushing experiment (Fig.~\ref{fig:portrobot}b) than that in the experiment with Hartmann-$(1,2)$ (Fig.~\ref{fig:synfinite}c).
\begin{figure}
    \centering
    \includegraphics[width=0.45\textwidth]{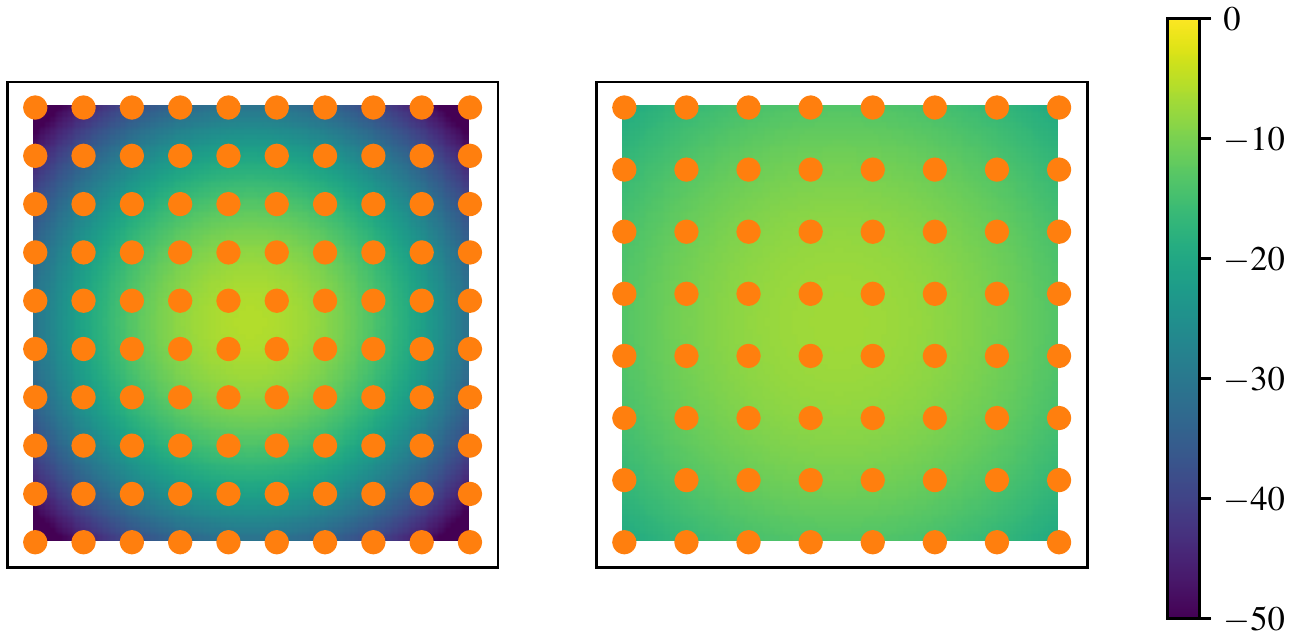}
    \caption{Plots of the log values of the un-normalized probabilities of the discrete $\mbf{Z}$ for the Hartmann-$(1,2)$ in the left plot and Robot pushing $(3,2)$ in the right plot. The orange dots show the realizations of the discrete $\mbf{Z}$.}
    \label{fig:zprob}
\end{figure}

When the closed-form expression of the objective function is known (e.g., synthetic benchmark functions) in the evaluation of the performance metric, the maximum value $\max_{\mbf{x} \in \dx} V_\alpha(f(\mbf{x},\mbf{Z}))$ can be evaluated accurately. On the other hand, when the closed-form expression of the objective function is unknown even in the evaluation of the performance metric (e.g., the simulated robot pushing experiment), the maximum value $\max_{\mbf{x} \in \dx} V_\alpha(f(\mbf{x},\mbf{Z}))$ is estimated by $\max_{\mbf{x} \in \mcl{D}_T} V_\alpha(f(\mbf{x},\mbf{Z})) + 0.01$ where $\mcl{D}_T$ are input queries in the experiments with both {\vy} and {\kg}. The addition of $0.01$ is to avoid $-\infty$ value in plots of the log values of the performance metric.

The sizes of the initial observations $\mcl{D}_0$ are $3$ for the Branin-Hoo and Goldstein-Price functions; $10$ for the Hartmann-3D function; $20$ for the portfolio optimization problem; and $30$ for the simulated robot pushing task. 
The initial observations are randomly sampled for different random repetitions of the experiments, but they are the same between the same iterations in {\vy} and {\kg}. 

The hyperparameters of GP (i.e., the length-scales and signal variance of the SE kernel) and the noise variance $\sigma_n^2$ are estimated by maximum likelihood estimation \cite{rasmussen06} every $3$ iterations of BO. We set a lower bound of $0.0001$ for the noise variance $\sigma_n^2$ to avoid numerical errors.

To show the advantage of LNSO, we set the number of samples of $\mbf{W}$ to be $10$ for both {\vy} and {\kg}. The number of samples of $\mbf{x}$, i.e., $|\mcl{X}|$, in LNSO (line 9 of Algorithm~\ref{alg:lnso}) is $50$. The radius $r$ of the local region $\mcl{B}$ is set to be a small value of $0.1$ such that a small neural network works well: $2$ hidden layers with $30$ hidden neurons at each layer; the activation functions of the hidden layers and the output layer are sigmoid and linear functions, respectively.

Since the theoretical value of $\beta_t$ is often considered as excessively conservative \cite{bogunovic16,srinivas10ucb,bogunovic2018adversarially}. We set $\beta_t = 2\log(t^2 \pi^2/0.6)$ in our experiments while $\beta_t$ can be tuned to achieved better exploration-exploitation trade-off \cite{srinivas10ucb} or multiple values of $\beta_t$ can be used in a batch mode \cite{torossian2020bayesian}.

\end{document}